\numberwithin{equation}{section}
\definecolor{darkblue}{rgb}{0.0,0,0.5}
\renewcommand{\L}{\mathcal{L}}
\newcommand{\cont}{{\rm cont}}
\begin{document}

\title{On the Limitation of Kernel Dependence Maximization for Feature Selection} 

\author{
Keli Liu\thanks{Company A},~~
Feng Ruan\thanks{Department of Statistics and Data Science, Northwestern University}}
\maketitle
\maketitle

\begin{abstract}
 A simple and intuitive method for feature selection consists of choosing the feature subset that maximizes a nonparametric measure of dependence between the response and the features. A popular proposal from the literature uses the Hilbert-Schmidt Independence Criterion (HSIC) as the nonparametric dependence measure. The rationale behind this approach to feature selection is that important features will exhibit a high dependence with the response and their inclusion in the set of selected features will increase the HSIC. Through counterexamples, we demonstrate that this rationale is flawed and that feature selection via HSIC maximization can miss critical features.

\end{abstract}

\newcommand{\HSIC}{{\rm HSIC}}

\section{Introduction} 
\indent\indent 
In statistics and machine learning, feature selection aims to isolate 
a subset of features that fully captures the relationships between the features
and response variables. 
This critical step enhances model interpretability, reduces the computational burden
of downstream tasks and improves the model's generalization capabilities~\cite{HastieTiFr09}.

Nonparametric dependence measures are key tools for feature selection. Specifically, a 
nonparametric dependence measure between a response variable and a feature subset
quantifies the strength of dependence without assuming any parametric model for their relationship. 
Common examples include mutual information~\cite{Cover99}, 
distance covariance \cite{szekely2009brownian}, and kernel-based dependence 
measures such as the Hilbert-Schmidt Independence Criterion (HSIC)~\cite{GrettonBoSmSc05}. 
The HSIC, for a wide range of kernels, recognizes all modes of
dependence between 
variables, not just linear dependence, and large values of HSIC correspond to ``more dependence" 
(for the definition and the scope of HSIC we consider, see 
Section~\ref{sec:problem-setup-and-notation}). 
The literature has proposed selecting features by maximizing the HSIC~\cite{SongSmGrBoBe07, SongSmGrBeBo12}. 
Given features $X \in \R^p$ and response variable $Y \in \R$, at the population level, this approach 
selects a feature subset $X_{S_*}$, where $S_*$ is defined by
\begin{equation}
\label{eqn:variable-selection-general}
S_* = \argmax_{S: S \subseteq \{1, 2, \ldots, p\}} \HSIC(X_S; Y).
\end{equation}
The measure $\HSIC(X_S; Y)$
quantifies the nonparametric dependence between the subset of features $X_S$ and the 
response variable $Y$, with $X_S \in \R^{|S|}$ denoting the subvector formed by $X$ by including only the 
coordinates in the set $S$ (for a formal definition of $\HSIC(X_S; Y)$, see 
Section~\ref{sec:main-results}). 
The intuition behind this approach is clear: the subset of
features, $X_{S_*}$, exhibiting maximum dependence with the response $Y$ (as measured by 
$\HSIC$) should hopefully include all features necessary for explaining $Y$. 


Despite its widespread use~\cite{SongBeBoGrSm07, GeeithaTh18, WangDaLi21},
this paper reveals a critical problem in the use of HSIC for feature selection, an issue unaddressed in the existing literature. 
Through a counterexample, we show that HSIC maximization can fail to select
critical variables needed to fully explain the response, even at the population level. In our counterexample,
the set of selected variables $S_*$, which is defined by maximizing HSIC through
equation~\eqref{eqn:variable-selection-general}, 
does not guarantee full explanatory power: 
\begin{equation}
	\E[Y|X] \neq \E[Y|X_{S_*}]~~\text{and thus}~~\L(Y|X) \neq \L(Y|X_{S_*})
\end{equation}
where the notation $\E[\cdot \mid \cdot]$ and $\L(\cdot \mid \cdot)$ denote the conditional expectation and 
conditional distribution, respectively. The first inequality is in the $\mathcal{L}_2(\P)$ sense. 


Our counterexample shows that selecting features by maximizing HSIC can mistakenly 
exclude variables essential for full explaining the response variable $Y$. This result, as far as we are aware, enhances 
the current understanding of HSIC's statistical properties in feature selection (see Section~\ref{sec:prior-literature} for a more detailed literature review). 
In particular, our result helps elucidate the distinction between two approaches in the literature to feature selection using kernel dependence measures. 
The first approach, termed ``dependence maximization'', consists of finding some feature subset $S$ to 
maximize the kernel dependence between $X_S$ and the response $Y$ \cite{SongSmGrBoBe07}. 
The second approach, termed
``conditional dependence minimization'', consists of finding some feature subset $S$ such that the kernel \emph{conditional}
dependence between the response $Y$ and the feature vector $X$ is \emph{minimized} given $X_S$ \cite{FukumizuBaJo09}. 
While the first approach can be computationally cheaper, it has been 
\emph{unrecognized} that this first approach does not guarantee selecting all variables necessary 
for explaining the response
(see Section~\ref{sec:prior-literature} for more discussion of these two approaches).

\subsection{Problem Setup: Definition, Notation and Assumption} 
\label{sec:problem-setup-and-notation}
The Hilbert-Schmidt Independence Criterion (HSIC), introduced initially in~\cite{GrettonBoSmSc05}, 
is a 
measure quantifying the dependence between two random variables, $X$, and $Y$, using 
positive (semi)definite kernels. A positive semidefinite kernel $k$, simply called a kernel, on a 
set $\mathcal{X}$ is a symmetric function 
$k: \mathcal{X} \times \mathcal{X} \to \R$ that requires the sum
$\sum_{i=1}^n \sum_{j=1}^n c_i c_j k(x_i, x_j)$ to be nonnegative for any $n$ distinct 
points $x_1, x_2, \ldots, x_n \in \mathcal{X}$ and 
any real numbers $\{c_1, c_2, \ldots, c_n\}$. 
A kernel $k$ is called a positive definite kernel when this sum equals $0$ if and only 
if all $c_i$ are zero.  
 
We are ready to give a formal definition of HSIC, following~\cite{GrettonBoSmSc05, SongSmGrBeBo12}. 

\begin{definition}[HSIC]
\label{definition:HSIC}
Given a joint probability measure $(X, Y) \sim \Q$ over the 
product space $\mathcal{X} \times \mathcal{Y}$, 
along with two positive semidefinite kernels $k_X$, $k_Y$ on the sets $\mathcal{X}$ and $\mathcal{Y}$ 
respectively, the Hilbert-Schmidt Independence Criterion (HSIC) is defined as 
\begin{equation}
\label{eqn:HSIC-definition}
\begin{split} 
	\HSIC(X, Y) &= \E[k_X(X, X') \cdot  k_Y(Y, Y')] + \E[k_X(X, X')] \cdot \E[k_Y(Y, Y')] \\
		&~~~~ - 2\E \left[\E[k_X(X, X')|X'] \cdot \E[k_Y(Y, Y')|Y']\right].
\end{split} 
\end{equation}
In the above, all expectations are taken over independent copies $(X, Y), (X', Y') \sim \Q$.
\end{definition} 

The HSIC is always non-negative 
($\HSIC(X; Y) \ge 0$) for every probability distribution $\Q$ and
positive semidefinite kernels $k_X$ and $k_Y$~\cite{GrettonBoSmSc05, SongSmGrBeBo12}. 
Clearly, $\HSIC(X; Y) = 0$ whenever $X$ and $Y$ are independent. 
Additionally, when the kernels $k_X$ and $k_Y$ in the Definition~\ref{definition:HSIC} are 
characteristic kernels in the 
sense of~\cite[Section 2.2]{FukumizuGrSuSc07} or~\cite[Definition 6]{SriperumbudurFuLa11}, 
then $\HSIC(X; Y) = 0$ if and only if $X$ and $Y$ are independent (see~\cite[Theorem 4]{GrettonBoSmSc05}, and~\cite[Section 2.3]{SriperumbudurFuLa11} for the statement). Characteristic kernels enable HSIC to capture
all modes of dependence between $X$ and $Y$. Non-characteristic kernels can also be plugged into the definition of HSIC, equation~\eqref{eqn:HSIC-definition}, but they may fail to detect some forms of dependence between $X$ and $Y$ i.e. it is possible that $\HSIC(X; Y) = 0$ but $X$ and $Y$ are not independent.

Our counterexamples apply to a wide range of HSIC through various choices of kernels
$k_X$ and $k_Y$. Two requirements on the kernels $k_X$ and $k_Y$ are needed 
throughout the paper. The first requires that $k_X$ is a radially symmetric positive definite kernel.
\begin{assumption}
\label{assumption:kernel-representation}
The kernel $k_X$ obeys for every $x, x' \in \R^p$: 
\begin{equation*}
	k_X(x, x') = \phi_X(\norm{x-x'}_2^2).
\end{equation*}
where, for some finite nonnegative measure 
$\mu$ that is not concentrated at $0$, the following holds for every $z \ge 0$: 
\begin{equation*}
	\phi_X(z) = \int_0^\infty e^{-tz} \mu(dt).
\end{equation*}
\end{assumption} 
The integral representation of $\phi_X$ in Assumption~\ref{assumption:kernel-representation} is 
motivated by Schoenberg’s fundamental work~\cite{Schoenberg38}. Given a real-valued function 
$\phi_X$, for the mapping 
$(x, x') \mapsto \phi_X(\norm{x-x'}_2^2)$ to qualify as a positive definite 
kernel on $\R^m$ for every dimension 
$m \in \N$, the representation $\phi_X(z) = \int_0^\infty e^{-tz} \mu(dt)$ must hold for 
some nonnegative finite measure $\mu$ not concentrated at zero~\cite{Schoenberg38}. Two concrete examples satisfying 
Assumption~\ref{assumption:kernel-representation} are the Gaussian 
kernel where $\phi_X(z) = \exp(-z)$, and the Laplace kernel where $\phi_X(z) = \exp(-\sqrt{z})$.

Assumption~\ref{assumption:kernel-representation} 
ensures that the mapping  
$(x, x') \mapsto \phi_X(\norm{x-x'}_2^2)$ is a kernel on $\R^m$ for every dimension $m \in \N$. 
This property is useful when working with $\HSIC(X_S; Y)$
because the dimension of the feature vector $X_S$ is varying with the size of $S$ and may 
not equal to $p$. 
As a side remark, the kernel 
$(x, x') \mapsto \phi_X(\norm{x-x'}_2^2)$ is a characteristic
kernel on $\R^m$ for every dimension $m \in \N$ under 
Assumption~\ref{assumption:kernel-representation}~\cite[Proposition 5]{SriperumbudurFuLa11}.

Our second requirement is that the kernel $k_Y$ is either a function of $|y - y'|$ or a function of $yy'$. 
\begin{assumption}
\label{assumption:kernel-representation-Y}
The kernel $k_Y$ satisfies one of the following conditions:
\begin{itemize}
\item $k_Y(y, y') = \phi_Y(|y-y'|)$ with $\phi_Y(0) > \phi_Y(1)$ where $\phi_Y$ is a real-valued function.
\item $k_Y(y, y') = \phi_Y(yy')$ with $\phi_Y(1) > \phi_Y(-1)$ where $\phi_Y$ is a real-valued function.
\end{itemize} 
\end{assumption} 
For $k_Y(y, y') = \phi_Y(|y-y'|)$ to be a positive semidefinite kernel, $\phi_Y(0) \ge \phi_Y(1)$ 
must hold, as $\phi_Y(0) - \phi_Y(1)=\sum_{i,j=1}^2 c_i c_j k_Y(y_i, y_j) \ge 0$ where 
$c_1 = 1, c_2 = -1, y_1 = 1, y_2 = 0$.
Likewise, for $k_Y(y, y') = \phi_Y(yy')$ to be a positive semidefinite kernel, $\phi_Y(1) \ge \phi_Y(-1)$ must be true. Thus the inequalities in Assumption~\ref{assumption:kernel-representation-Y} are only slightly stricter than the requirement that $k_Y$ is a positive semidefinite kernel. Assumption~\ref{assumption:kernel-representation-Y} is very mild since it includes 
both characteristic kernels such as the Gaussian kernel 
$k_Y(y, y') = \exp(-(y-y')^2)$ and Laplace kernel $k_Y(y, y') = \exp(-|y-y'|)$ as well as 
non-characteristic kernels such as the inner product kernel $k_Y(y, y') = yy'$. 

Our counterexamples apply to any kernel $k_X, k_Y$ obeying 
Assumptions~\ref{assumption:kernel-representation} and~\ref{assumption:kernel-representation-Y}. 
Thus, it applies to a wide range of $\HSIC$, including those nonparametric dependence measures utilizing
characteristic kernels $k_X$ and $k_Y$ to capture all types of dependence
(say, when $k_X$ and $k_Y$ are both Gaussian kernels). In these cases,  
$\HSIC(X; Y) = 0$ if and only if $X$ and $Y$ are independent.



\subsection{Main Results} 
\label{sec:main-results}
Throughout the remainder of the paper, we operate with two kernels 
$k_X, k_Y$ on $\R^p$ and $\R$ that meet Assumptions~\ref{assumption:kernel-representation} 
and~\ref{assumption:kernel-representation-Y}, respectively. Our main results 
(counterexamples) for HSIC are applicable to any such kernels.

Given the covariates $X \in \R^p$ and the response $Y \in \R$, with $(X, Y) \sim \P$, in this paper
we study the \emph{population} level statistical properties of the solution 
set $S_*$ of the following HSIC maximization procedure: 
\begin{equation}
\label{equation:MMD-set-maximizer} 
	S_*= \argmax_{S: S\subseteq \{1, 2, \ldots, p\}}  \HSIC(X_S; Y),
\end{equation}
where $\HSIC(X_S; Y)$ is defined through the kernels $k_{X; |S|}$ and $k_Y$. Here the
kernel $k_{X; |S|}$ on $\R^{|S|}$ is induced by the original kernel $k_X$ on $\R^p$ in the following way
(recall $\phi_X$ in the definition of $k_X$):
\begin{equation*}
	k_{X; |S|}(x, x') = \phi_X(\norm{x-x'}_2^2)~~\text{for $x, x' \in \R^{|S|}$}.
\end{equation*}
For transparency, following Definition~\ref{definition:HSIC} we can write down the definition: 
\begin{equation}
\label{eqn:HSIC-S}
\begin{split} 
	\HSIC(X_S, Y) &= \E[k_{X; |S|}(X_S, X_S') \cdot  k_Y(Y, Y')] + \E[k_{X; |S|}(X_S, X_S')] \cdot \E[k_Y(Y, Y')] \\
		&~~~~ - 2\E \left[\E[k_{X; |S|}(X_S, X_S')|X'] \cdot \E[k_Y(Y, Y')|Y']\right],
\end{split} 
\end{equation} 
where $(X, Y)$, $(X', Y')$ denote independent draws from $\P$.
This definition of $\HSIC(X_S, Y)$ is natural, and aligns with its proposed use for feature selection, e.g.,~\cite{SongBeBoGrSm07}. 

Our main result shows that the feature subset $X_{S_*}$, chosen by maximizing 
$\HSIC$ as in equation~\eqref{equation:MMD-set-maximizer}
can fail to capture the explanatory relationship between $X$ and $Y$. 

\begin{theorem}
\label{theorem:inconsistency-HSIC} 
Assume the dimension $p \ge 2$.
Given any kernel $k_X$ on $\R^p$ obeying Assumption~\ref{assumption:kernel-representation} and 
any kernel $k_Y$ on $\R$ obeying Assumption~\ref{assumption:kernel-representation-Y}, there exists a probability 
distribution $\P$ of $(X, Y)$ supported on $\R^p \times \R$
such that any maximizer $S_*$ defined via equation~\eqref{equation:MMD-set-maximizer}
fails to include all relevant features in the following sense: 
\begin{equation*}
\text{$\E[Y|X]\neq \E[Y|X_{S_*}]$}~~\text{and thus}~~\L(Y|X) \neq \L(Y|X_{S_*}).
\end{equation*} 
In the above, the first inequality is under the $\mathcal{L}_2(\P)$ sense. 
\end{theorem}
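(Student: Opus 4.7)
My plan is to build, given any admissible $k_X$ and $k_Y$, an explicit two-parameter family of distributions $\P = \P_{\epsilon, M}$ on $\R^p \times \R$ and then choose $\epsilon > 0$ small and $M > 0$ large so that the unique maximizer of $S \mapsto \HSIC(X_S, Y)$ is $\{1\}$, while a nontrivial function of $X_2$ still enters $\E[Y \mid X]$. I work with $p = 2$; padding with constant coordinates handles $p > 2$. Let $X_1$ be a fair binary variable on $\{0,1\}$ in the $k_Y(y,y') = \phi_Y(|y-y'|)$ clause of Assumption~\ref{assumption:kernel-representation-Y}, or on $\{-1,+1\}$ in the $k_Y(y,y') = \phi_Y(yy')$ clause. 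Let $W \sim N(0,1)$ be independent of $X_1$, set $X_2 = M W$, and let $V$ be a binary function of $W$ taking the same two values as $X_1$ (e.g., $\mathrm{sign}(W)$, shifted if needed). Define
\[
Y \;=\; \begin{cases} X_1 & \text{with probability } 1 - \epsilon, \\ V & \text{with probability } \epsilon, \end{cases}
\]
with the selecting Bernoulli independent of $(X_1, W)$. The intuition is that $X_2$ carries a real but small signal about $Y$ through $V$, while its huge scale $M$ dilutes $k_X$ so strongly that enlarging $S$ from $\{1\}$ to $\{1,2\}$ actually decreases the HSIC.

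Relevance of $X_2$ will follow from a direct computation: $\E[Y \mid X_1, X_2] - \E[Y \mid X_1] = \epsilon(V - \E V)$, which has positive $L^2(\P)$-norm since $V$ is nonconstant and $\epsilon > 0$. To lower-bound $\HSIC(X_1, Y)$, I would first set $\epsilon = 0$ (so $Y = X_1$); a four-point enumeration on the binary support factors the HSIC as a product of two gap terms---$\tfrac{1}{4}(\phi_X(0) - \phi_X(1))(\phi_Y(0) - \phi_Y(1))$ in the first clause of Assumption~\ref{assumption:kernel-representation-Y}, and $\tfrac{1}{4}(\phi_X(0) - \phi_X(4))(\phi_Y(1) - \phi_Y(-1))$ in the second. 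Both factors are strictly positive: Assumption~\ref{assumption:kernel-representation} forces $\phi_X$ to be strictly decreasing since $\mu$ is not concentrated at $0$, and the gap in $\phi_Y$ is exactly Assumption~\ref{assumption:kernel-representation-Y}. Continuity of HSIC in the joint law of $(X_1, Y)$ then yields $\HSIC(X_1, Y) \ge \tfrac{1}{2}\HSIC(X_1, X_1) > 0$ for all sufficiently small $\epsilon$.

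For the matching upper bound on $\HSIC(X_S, Y)$ whenever $2 \in S$, I would send $M \to \infty$. Since $W$ is continuous, $W \neq W'$ almost surely, so $M^2 (W - W')^2 \to \infty$ almost surely; the Bernstein representation $\phi_X(z) = \int_0^\infty e^{-tz}\,d\mu(t)$ together with dominated convergence in $t$ (dominator $1 \in L^1(\mu)$) gives $k_{X;|S|}(X_S, X_S') \to \mu(\{0\})$ almost surely. Because $k_X$ is uniformly bounded by $\phi_X(0)$, a second application of dominated convergence propagates this to every expectation in the HSIC formula, and the limit HSIC with a constant kernel is $0$. Hence $\HSIC(X_{\{2\}}, Y)$ and $\HSIC(X_{\{1,2\}}, Y)$ both tend to $0$ as $M \to \infty$, while $\HSIC(X_\emptyset, Y) = 0$. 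Fixing $\epsilon$ small first and then $M$ large makes $\{1\}$ the unique maximizer of $S \mapsto \HSIC(X_S, Y)$; combined with the relevance of $X_2$, this is the desired counterexample.

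The principal obstacle I anticipate is covering both clauses of Assumption~\ref{assumption:kernel-representation-Y} with a single construction; I get around it by matching the binary supports of $X_1$ and $V$ to whichever clause is active, ensuring that $k_Y$ strictly separates the two attained values of $Y$. The only other delicate step is the two-layer dominated-convergence argument for $M \to \infty$---one layer inside $\phi_X$, one layer in the outer expectations---but this is routine given the uniform bound $\phi_X(z) \le \phi_X(0) < \infty$.
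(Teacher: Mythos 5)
Your proposal is correct, but it reaches the conclusion by a genuinely different mechanism than the paper. The paper builds a single scale-free discrete law $\P_\Delta$ on $\{\pm 1\}^2\times\{\pm 1\}$ with $X_1\perp X_2\mid Y$ and signal strengths $\Delta_1>\Delta_2$, computes $\HSIC(\beta\odot X;Y)$ in closed form as an explicit function $L_\Delta(\beta)$, and then shows via Chebyshev's sum inequality that once $\Delta_1^2-\Delta_2^2$ is large enough relative to $\Delta_1^2+\Delta_2^2$, zeroing out the weaker coordinate strictly increases $L_\Delta$. You instead exploit kernel degeneracy under scale blow-up: with $X_2=MW$ continuous and $W\neq W'$ a.s., the Bernstein representation gives $k_{X;|S|}(X_S,X_S')\to\mu(\{0\})$ a.s. for any $S\ni 2$, and two applications of dominated convergence drive $\HSIC(X_S;Y)\to 0$, while $\HSIC(X_{\{1\}};Y)$ is a fixed positive constant (your four-point factorization $\tfrac14(\phi_X(0)-\phi_X(1))(\phi_Y(0)-\phi_Y(1))$, resp.\ $\tfrac14(\phi_X(0)-\phi_X(4))(\phi_Y(1)-\phi_Y(-1))$, is right, as is the continuity-in-$\epsilon$ step and the identity $\E[Y\mid X]-\E[Y\mid X_1]=\epsilon(V-\E V)$). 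Your route buys a softer, computation-light argument for Theorem~\ref{theorem:inconsistency-HSIC}: no closed-form evaluation of the two-feature HSIC is needed, only its limit. What the paper's construction buys is reuse: because $\P_\Delta$ lives on $\{\pm1\}^2$ and the defeat of $X_2$ is driven by the signal asymmetry rather than by scale, the very same distribution also proves the continuous relaxation in Theorem~\ref{theorem:inconsistency-HSIC-b}, whereas your counterexample is specific to the combinatorial formulation --- in the weighted problem the optimizer could take $\beta_2\asymp 1/M$ and undo the scale separation, so a different construction would be needed there. Also note the paper's closed form quantifies exactly when exclusion happens (condition~\eqref{eqn:condition-for-feature-selection}), while your argument is purely asymptotic in $M$; for the theorem as stated this costs nothing.
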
 

Theorem~\ref{theorem:inconsistency-HSIC} cautions against relying on HSIC 
 maximization for feature selection.
The reader might wonder whether it is the \emph{combinatorial} structure in defining the maximizer 
$S_*$ that is responsible for the failure observed in Theorem~\ref{theorem:inconsistency-HSIC}. 
To clarify the nature of the problem, we explore a continuous version 
of the combinatorial objective in equation~\eqref{equation:MMD-set-maximizer}. 
Below, we use $\odot$ to represent coordinate-wise 
multiplication between vectors; for instance, given a vector $\beta \in \R^p$, and covariates $X \in \R^p$, 
the product $\beta \odot X$ yields a new vector in $\R^p$ with its $i$th coordinate $(\beta\odot X)_i = \beta_i X_i$.

Now we consider HSIC maximization through a continuous optimization: 
\begin{equation}
\label{equation:MMD-set-maximizer-c} 
	S_{*,\cont} = \supp(\beta_*)~~\text{where}~~\beta_* = \argmax_{\beta: \norm{\beta}_{\ell_q} \le r} 
		\HSIC(\beta \odot X; Y)
\end{equation}
where $\HSIC(\beta \odot X, Y)$  measures the dependence between the 
weighted feature $\beta \odot X \in \R^p$ and the response $Y \in \R$, utilizing the kernels $k_X$ and $k_Y$
on $\R^p$ and $\R$, respectively: 
\begin{equation}
\label{eqn:HSIC-S}
\begin{split} 
	\HSIC(\beta \odot X, Y) &= \E[k_{X}(\beta \odot X, \beta \odot X') \cdot  k_Y(Y, Y')] + 
		\E[k_{X}(\beta \odot X, \beta \odot X')] \cdot \E[k_Y(Y, Y')] \\
		&~~~~ - 2\E \left[\E[k_{X}(\beta \odot X, \beta \odot X')|X'] \cdot \E[k_Y(Y, Y')|Y']\right].
\end{split} 
\end{equation} 
In its definition~\eqref{equation:MMD-set-maximizer-c},  we first find the weight 
vector $\beta_* \in \R^p$ that maximizes the HSIC dependence 
measure between the response $Y$ and the weighted covariates $\beta \odot X$, and then 
use the support set of $\beta_*$ (the set of indices corresponding to
non-zero coordinates of $\beta_*$) to define the feature set $S_{*, \cont}$. 
Note that imposing a norm constraint on the weights $\beta$---$\norm{\beta}_{\ell_q} \le r$ where
$\norm{\cdot}_{\ell_q}$ denotes the $\ell_q$ norm for $q \in [1, \infty]$ and 
$r < \infty$---ensures the existence of a well-defined maximizer $\beta_*$.


\begin{theorem}
\label{theorem:inconsistency-HSIC-b}
Assume the dimension $p \ge 2$.
Given any kernel $k_X$ on $\R^p$ obeying Assumption~\ref{assumption:kernel-representation}, 
kernel $k_Y$ on $\R$ obeying Assumption~\ref{assumption:kernel-representation-Y}, and any 
$\ell_q$ norm $\norm{\cdot}_{\ell_q}$ where $q \in [1, \infty]$ and radius $r < \infty$, there exists a probability 
distribution $\P$ of $(X, Y)$ supported on $\R^p \times \R$ such that any maximizer $\beta_*$ and the 
corresponding set $S_{*,\cont}$ defined via equation~\eqref{equation:MMD-set-maximizer-c} 
fail to include all relevant features: 
\begin{equation*}
\text{$\E[Y|X]\neq \E[Y|X_{S_{*,\cont}}]$}~~\text{and thus}~~\L(Y|X) \neq \L(Y|X_{S_{*, \cont}}).
\end{equation*} 
In the above, the first inequality is in the $\mathcal{L}_2(\P)$ sense. 
\end{theorem}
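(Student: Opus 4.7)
The plan is to extend the construction underlying Theorem~\ref{theorem:inconsistency-HSIC} to the continuous optimization by choosing $\P$ so that for every $\beta$ with $\beta_2 \neq 0$ in the feasible ball $\{\norm{\beta}_{\ell_q} \leq r\}$, $\HSIC(\beta \odot X; Y)$ is strictly smaller than the maximum attained on the axis $\beta_2 = 0$. I would focus on $p=2$ (padding the remaining coordinates with constants when $p>2$), take $X_1 \perp X_2$ with $X_2$ having a large scale parameter $\sigma>0$, and set $Y = f(X_1) + \epsilon X_2$ for some nontrivial bounded $f$ and small $\epsilon>0$; both $\epsilon$ and $\sigma$ will be chosen based on $k_X, k_Y, q, r$. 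Relevance of $X_2$ is immediate since $\E[Y|X_1,X_2]-\E[Y|X_1]=\epsilon(X_2-\E[X_2])$ is nonzero in $\mathcal{L}_2(\P)$.

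The analytic core is Schoenberg's representation $\phi_X(z)=\int_0^\infty e^{-tz}\mu(dt)$ from Assumption~\ref{assumption:kernel-representation}, which yields
\[
k_X(\beta\odot x;\beta\odot x') = \int_0^\infty e^{-t\beta_1^2(x_1-x_1')^2}\, e^{-t\beta_2^2(x_2-x_2')^2}\,\mu(dt).
\]
Substituting this into $\HSIC(\beta\odot X;Y)$ and invoking $X_1\perp X_2$, every expectation factors coordinatewise for each fixed $t$. In the reference case $\epsilon=0$, where $Y=f(X_1)$ and $k_Y$ depends only on $(X_1,X_1')$, the $\beta_2$-dependence separates cleanly:
\[
\HSIC(\beta\odot X;Y)\big|_{\epsilon=0} = \int_0^\infty \E\bigl[e^{-t\beta_2^2(X_2-X_2')^2}\bigr]\cdot H(\beta_1,t)\,\mu(dt),
\]
where $H(\beta_1,t)\geq 0$ is itself the HSIC of $\beta_1 X_1$ with $Y$ using the Gaussian kernel $e^{-t\cdot}$. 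Since $H(\beta_1,t)$ is strictly positive for $\beta_1\neq 0$ on a $\mu$-nonnegligible set of $t$ and the prefactor is strictly below $1$ whenever $\beta_2\neq 0$, the $\epsilon=0$ maximizer must satisfy $\beta_{*,2}=0$.

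The main obstacle will be upgrading this from $\epsilon=0$ (where $X_2$ is irrelevant) to $\epsilon>0$ (where $X_2$ is relevant). I would split the compact feasible ball into two regions. For $|\beta_2|$ bounded below by some $\eta>0$, choosing $\sigma$ large enough makes $\E[e^{-t\beta_2^2(X_2-X_2')^2}]$ uniformly small by dominated convergence, driving $\HSIC(\beta\odot X;Y)$ below $M_0 := \max_{|\beta_1|\leq r}\HSIC((\beta_1,0,\ldots,0)\odot X;Y)>0$. For $|\beta_2|<\eta$, differentiation under the integral shows that the first-order $\beta_2$-derivative of $\HSIC$ at $\beta_2=0$ vanishes identically while the second-order term, strictly negative at $\epsilon=0$ by the display above, remains negative by continuity in $\epsilon$. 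Together, these force the global maximizer on the compact feasible ball to have $\beta_{*,2}=0$, so $2 \notin S_{*,\cont}$ even though $X_2$ is relevant, giving $\E[Y|X_{S_{*,\cont}}]\neq \E[Y|X]$.
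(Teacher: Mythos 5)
Your factorization of $\HSIC(\beta\odot X;Y)$ at $\epsilon=0$ is correct (HSIC is linear in $k_X$, so you may integrate over $\mu(dt)$, and for each fixed $t$ the product structure of the Gaussian kernel together with $X_2\perp(X_1,Y)$ makes all three terms in the definition share the factor $\E[e^{-t\beta_2^2(X_2-X_2')^2}]$). The genuine gap is that your response $Y=f(X_1)+\epsilon X_2$ is a non-binary real variable, while Assumption~\ref{assumption:kernel-representation-Y} only controls $\phi_Y$ at the two arguments $0$ and $1$ (resp.\ $1$ and $-1$) and otherwise demands nothing beyond positive semidefiniteness. For the case $k_Y(y,y')=\phi_Y(yy')$ your argument has no traction at all: once $Y$ leaves $\{\pm 1\}$, the products $YY'$ land where $\phi_Y$ is completely unconstrained, so you cannot certify that $H(\beta_1,t)>0$, i.e., that the kernel $k_Y$ detects the dependence your construction plants. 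Even in the $|y-y'|$ case the construction can fail outright: take $\phi_Y=\mathbf{1}_{\{0\}}$, which is positive semidefinite and satisfies $\phi_Y(0)>\phi_Y(1)$; if $X_2$ has a continuous distribution then $Y=Y'$ with probability zero, $k_Y(Y,Y')=0$ almost surely, $\HSIC(\beta\odot X;Y)\equiv 0$, every feasible $\beta$ (including full-support ones) is a maximizer, and the theorem's conclusion is false for your $\P$. This is exactly why the paper forces $Y\in\{\pm1\}$ and $X\in\{\pm1\}^2$: then $k_Y(Y,Y')$ takes only the two values the assumption pins down, and Lemma~\ref{lemma:expression-of-L-Delta-beta} reduces the whole objective exactly to $L_\Delta(\beta)$ times a positive constant.

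Even granting a repaired (discrete, binary-response) construction, your small-$|\beta_2|$ regime is underspecified: a strictly negative second-order coefficient at $\beta_2=0$ for each fixed $\beta_1$ does not exclude maximizers on a fixed band $0<|\beta_2|<\eta$ unless you establish uniformity over the compact set of admissible $\beta_1$ and over small $\epsilon$, and you must also contend with $\phi_X'(0)=-\int_0^\infty t\,\mu(dt)$ possibly being infinite, which spoils a clean Taylor expansion. The paper avoids all perturbative asymptotics: it computes $L_\Delta$ in closed form and proves a non-asymptotic inequality (Lemma~\ref{lemma:reducing-weaker-signal-to-zero}, via Chebyshev's sum inequality) showing that whenever $\beta_1\ge b_0=\Phi^{-1}(r)$ --- which the active $\ell_q$ constraint and the permutation lemma force at any maximizer --- zeroing out $\beta_2$ strictly increases the objective for every $\beta_2\neq0$ simultaneously. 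To close your argument you would need either to discretize down to essentially the paper's binary setting or to supply these uniform estimates and a kernel-$k_Y$ argument valid under the full generality of Assumption~\ref{assumption:kernel-representation-Y}.
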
 

Theorem~\ref{theorem:inconsistency-HSIC} and Theorem~\ref{theorem:inconsistency-HSIC-b} 
show that care must be taken when using HSIC for feature selection. 
Notably, the requirement that the dimension $p \ge 2$ is essential for 
Theorem~\ref{theorem:inconsistency-HSIC} and Theorem~\ref{theorem:inconsistency-HSIC-b}
to hold. This necessity follows from a positive result on HSIC maximization when using characteristic kernels $k_X, k_Y$: 
the output feature set $X_{S_*}$ (or $X_{S_{*,\cont}}$) will be dependent with $Y$ unless $X$ and $Y$ are 
independent.

\begin{proposition}
\label{proposition:positive-result}
Suppose both kernels $k_X$ and $k_Y$ are characteristic on $\R^p$ and $\R$ respectively 
in the sense of~\cite[Definition 6]{SriperumbudurFuLa11}. Applying the discrete 
HSIC maximization approach~\eqref{equation:MMD-set-maximizer} and the continuous 
HSIC maximization approach~\eqref{equation:MMD-set-maximizer-c} (with $r > 0$)
will result in nonempty feature sets $S_*$ and $S_{*, \cont}$, 
with $X_{S_*}$ and $X_{S_{*,\cont}}$ dependent on $Y$  respectively, 
unless $X$ and $Y$ are independent.
\end{proposition}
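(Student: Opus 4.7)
The plan is to treat both cases together by exploiting two observations: (i) the maximum value of $\HSIC$ over the feasible set is strictly positive whenever $X$ and $Y$ are dependent, and (ii) the support/weight vector at any maximizer must produce a random variable that is a function of $X_{S_*}$ (or $X_{S_{*,\cont}}$) and is itself non-independent of $Y$. Since we operate under Assumption~\ref{assumption:kernel-representation}, the induced kernels $k_{X; |S|}$ on $\R^{|S|}$ are characteristic for every $|S| \ge 1$, so the standard biconditional $\HSIC(\cdot;\cdot) = 0 \iff \text{independence}$ is available at every relevant dimension.

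For the discrete problem~\eqref{equation:MMD-set-maximizer}, I would first note that the full set $S = \{1,\ldots,p\}$ is feasible and gives $\HSIC(X; Y) > 0$ by characteristic property of $k_X$ and $k_Y$ (contrapositive of the independence biconditional, using that $X \not\perp Y$). Next, I would rule out $S = \emptyset$ by direct computation: the induced kernel $k_{X;0}$ is the constant $\phi_X(0)$, and plugging a constant kernel into the three terms of~\eqref{eqn:HSIC-definition} yields exact cancellation, so $\HSIC(X_\emptyset; Y) = 0$. Hence any maximizer $S_*$ has $|S_*| \ge 1$ and satisfies $\HSIC(X_{S_*}; Y) \ge \HSIC(X; Y) > 0$. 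Invoking the characteristic property of $k_{X;|S_*|}$ (from Assumption~\ref{assumption:kernel-representation}) and $k_Y$ completes the argument that $X_{S_*}$ and $Y$ are dependent.

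For the continuous problem~\eqref{equation:MMD-set-maximizer-c}, the key feasibility move is to pick $\beta = c\,\mathbf{1}$ with $c > 0$ small enough that $\norm{\beta}_{\ell_q} \le r$; this is possible exactly because $r > 0$. Then $\beta \odot X = c X$ and the map $x \mapsto c x$ is a bijection on $\R^p$, so $c X$ inherits non-independence from $X$ and hence $\HSIC(cX; Y) > 0$ by the characteristic property of $k_X$ on $\R^p$. This forces the optimum to be strictly positive. If $\beta_* = 0$, the same constant-kernel cancellation as in the discrete case gives $\HSIC(0; Y) = 0$, contradicting optimality, so $\beta_* \ne 0$ and $S_{*,\cont} = \supp(\beta_*) \ne \emptyset$. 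Finally, since $\beta_* \odot X$ is a (measurable) function of $X_{S_{*,\cont}}$, independence of $X_{S_{*,\cont}}$ and $Y$ would imply independence of $\beta_* \odot X$ and $Y$, hence $\HSIC(\beta_* \odot X; Y) = 0$, contradicting the positivity just established.

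There is no serious obstacle to overcome; the only point that needs care is the treatment of the degenerate case (empty $S$ or zero $\beta$), which is handled by showing that a constant feature yields $\HSIC = 0$ via direct algebraic cancellation of the three terms in Definition~\ref{definition:HSIC}. Everything else is a clean application of (a) positivity of $\HSIC$ at a convenient feasible point and (b) the characteristic-kernel biconditional lifted through the ``function of $X_{S_*}$'' structure of the candidate statistic.
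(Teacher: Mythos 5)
Your proof is correct and follows essentially the same route as the paper's: the maximum of $\HSIC$ over the feasible set is at least $\HSIC(X;Y)>0$ (resp.\ $\HSIC(cX;Y)>0$ for a small feasible $c\,\mathbf{1}$) by the characteristic-kernel property, and since independence of $X_{S_*}$ (or $X_{S_{*,\cont}}$) and $Y$ would force the corresponding $\HSIC$ to vanish, the selected set must be nonempty and dependent on $Y$. You simply make explicit two points the paper leaves implicit --- the vanishing of $\HSIC$ for a constant feature and the feasibility argument when $r>0$ --- which is fine.
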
 

\begin{proof}
Suppose $X$ and $Y$ are not independent. Then $\HSIC(X; Y) > 0$ because 
both kernels $k_X$ and $k_Y$ are characteristic kernels. The result follows from 
basic properties of $\HSIC$, see e.g.,
\cite[Theorem 4]{GrettonBoSmSc05}, and~\cite[Section 2.3]{SriperumbudurFuLa11}.

If we use the discrete maximization approach~\eqref{equation:MMD-set-maximizer},
then $\HSIC(X_{S_*}; Y) \ge \HSIC(X; Y)> 0$. This implies that $X_{S_*}$ and $Y$ are dependent because 
by definition of $\HSIC$, $\HSIC(X_{S_*}; Y)=0$ if $X_{S_*}$ and $Y$ are independent. Similarly, 
if we use the continuous HSIC maximization approach~\eqref{equation:MMD-set-maximizer-c}, 
then $\HSIC(\beta_*\odot X, Y) > 0$ which then implies $X_{S_*, \cont}$ and $Y$ cannot be independent.
\end{proof} 

To conclude, while intuitively appealing, using HSIC maximization for feature selection may inadvertently 
exclude variables necessary for fully understanding the response $Y$. 

%
%

\subsection{Connections to Prior Literature} 
\label{sec:prior-literature}
In the literature, there are two primary approaches to employing kernel-based nonparametric 
(conditional) dependence measures between variables for feature selection.

The first approach, directly relevant to our work, is to maximize the dependence measure 
between the response variable $Y$ and the feature subset $X_S$. In this paper, we focus on 
Hilbert Schmidt Independence Criterion (HSIC), a popular kernel-based nonparametric
dependence measure~\cite{GrettonBoSmSc05}. Unlike many other dependence measures, 
HSIC allows easy estimation from finite samples with parametric 
$1/\sqrt{n}$ convergence rates where $n$ is the sample size~\cite{GrettonBoSmSc05}.
Since its introduction, HSIC has become a popular tool for feature selection. 
This HSIC maximization method, formulated as equation~\eqref{equation:MMD-set-maximizer}, 
was developed by \cite{SongSmGrBoBe07} and has been applied in diverse fields such as 
bioinformatics \cite{SongBeBoGrSm07, GeeithaTh18}. Subsequent works \cite{MasaeliDyFu10, SongSmGrBeBo12} 
proposed continuous relaxations of this formulation to facilitate computation. 
Our main findings, Theorem \ref{theorem:inconsistency-HSIC} and Theorem \ref{theorem:inconsistency-HSIC-b}, 
provide concrete counterexamples advising caution when selecting features via HSIC maximization, 
as one may miss features needed for fully understanding the response $Y$.

The second approach minimizes the kernel conditional dependence between $Y$ and $X$ given the feature subset 
$X_S$ \cite{FukumizuGrSuSc07, FukumizuBaJo09, ChenStWaJo17, ChenLiLiRu23}. Remarkably, this approach 
ensures that the selected feature subset $X_{S_*}$ contains all features necessary to fully explain the response $Y$, 
as guaranteed by the conditional independence $Y \perp X \mid X_{S_*}$. This theoretical guarantee is achieved under a 
nonparametric setup without assuming specific parametric distribution models for the data $(X, Y)$~\cite{FukumizuBaJo09}. 
However, computing the kernel conditional dependence measures required by this approach is more 
computationally demanding than calculating the HSIC used in the maximization approach due to the need for kernel 
matrix inversions~\cite{FukumizuGrSuSc07}.

Notably, our main findings elucidate a distinction between these two kernel-based feature selection strategies. The maximization 
approach, which we study in this paper, does not enjoy the same conditional independence guarantee as the minimization approach. 
Our counterexamples in Theorem \ref{theorem:inconsistency-HSIC} and Theorem \ref{theorem:inconsistency-HSIC-b} demonstrate 
that the HSIC maximization approach can fail to select critical variables needed to fully explain the response $Y$, highlighting a 
tradeoff between computational effort and the ability to recover all important variables.

To clarify, our research focuses on maximizing HSIC for feature selection as described in equation~\eqref{equation:MMD-set-maximizer}. 
It is important to note that there are alternative methods of employing HSIC in feature selection, such as HSIC-Lasso
described in~\cite{YamadaJiSiXiSu14}, which involves selecting features through a least square regression that applies $\ell_1$ penalties
to the feature weights. HSIC-Lasso, though sharing a commonality in nomenclature with HSIC, is fundamentally a regression method, which  
differs from the dependence maximization strategy we explore here in equation~\eqref{equation:MMD-set-maximizer}. 
Our counterexamples do not extend to HSIC-Lasso.

Finally, consider the recent developments in feature selection for binary labels $Y \in \{\pm 1\}$, where a feature subset $S$ is 
selected by maximizing the Maximum Mean Discrepancy (MMD) between the conditional distributions $\mathcal{L}(X_S \mid Y=1)$
and $\mathcal{L}(X_S \mid Y=-1)$ over $S$ \cite{WangDeXi23, MitsuzawaKaBoGrPa23}. Given the established
connections between MMD and HSIC in the literature, e.g.,
\cite[Theorem 3]{SongSmGrBoBe07}, our counterexamples in Theorem~\ref{theorem:inconsistency-HSIC} and 
Theorem~\ref{theorem:inconsistency-HSIC-b}---developed under conditions with binary response $Y$
(see the proofs in Section~\ref{sec:proofs})---have further implications for these 
MMD-based feature selection methods. We intend to report these implications and discuss methodological developments 
in a follow-up paper.

\section{Proofs}
\label{sec:proofs}
In this section, we prove Theorem~\ref{theorem:inconsistency-HSIC} and 
Theorem~\ref{theorem:inconsistency-HSIC-b}. Throughout, for a vector 
$X \in \R^p$, we will denote the coordinates of $X$ under the standard Euclidean 
basis as $X_1, X_2, \ldots, X_p$. This notation will also similarly apply to other vectors 
in the proof. 

The core of our proof focuses on the critical case where the dimension $p = 2$ (Section~\ref{sec:p=2}). 
We subsequently extend these results to any dimension $p\ge 2$, which is straightforward
(Section~\ref{sec:general-dimension-p-ge-2}).

\subsection{The case $p = 2$}
\label{sec:p=2}
We establish 
Theorem~\ref{theorem:inconsistency-HSIC} and 
Theorem~\ref{theorem:inconsistency-HSIC-b} for the case $p = 2$.

For our counterexample, we construct a family of distributions $\P_\Delta$ parameterized 
by a vector $\Delta$. Our proof proceeds by showing that there exists a vector $\Delta$ such that if the 
data $(X, Y)$ follows the distribution $\P_\Delta$, then the conclusions of 
Theorem~\ref{theorem:inconsistency-HSIC} 
and Theorem~\ref{theorem:inconsistency-HSIC-b} hold.

\subsubsection{Definition of $\P_\Delta$} 
\label{sec:definition-of-P-Delta}
The distribution $\P_\Delta$ is defined on a product space $\{\pm 1\}^2 \times \{\pm 1\}$, 
with the covariates $X \in \{\pm 1\}^2$ and the response $Y \in \{\pm 1\}$. The parameter 
$\Delta = (\Delta_1, \Delta_2) \in [0, 1]^2$.

Below we specify the details of the construction.  
\begin{itemize}
\item The response $Y \in \{\pm 1\}$ is balanced:  $\P_\Delta(Y = +1) = \P_\Delta(Y = -1) = 1/2$. 
\item The covariate $X \in \{\pm 1\}^2$, with components $X = (X_1, X_2)$, obeys a conditional 
independence given $Y$: $X_1 \perp X_2 \mid Y$.  Moreover, the conditional distribution of 
$X_j$ ($j = 1, 2$) given $Y$ obeys:
\begin{equation*}
	\P_\Delta(X_1 = \pm 1 \mid Y) = \half (1 \pm \Delta_1 Y), 
	~~~~~
	\P_\Delta(X_2 = \pm 1 \mid Y) = \half (1 \pm \Delta_2 Y).
\end{equation*} 
\end{itemize} 
This construction determines the joint distribution of $(X_1, X_2, Y)$ as follows: 
\begin{equation*}
	\P_\Delta(X_1= x_1, X_2 = x_2, Y =y) = \frac{1}{8} (1 + \Delta_1 x_1 y) (1+\Delta_2 x_2y)~~\text{
		for $x_1, x_2, y \in \{\pm 1\}$}. 
\end{equation*}
Intuitively, $\Delta_1$ and $\Delta_2$ measure how much $Y$ depends on $X_1$ and 
$X_2$, respectively. For instance, $\Delta_1 = 0$ indicates that $X_1$ and $Y$ are independent, 
while $\Delta_1 = 1$ means $X_1$ perfectly predicts $Y$. 
Thus, when both $\Delta_1, \Delta_2$ lie within $(0, 1)$, this intuition suggests that both features 
$X_1$ and $X_2$ are needed in order to fully explain the response $Y$. 
Lemma~\ref{lemma:both-X_1-X_2-useful} formally establishes this. 

We use $\E_\Delta$ to denote the expectation under $\P_\Delta$. 
\begin{lemma}
\label{lemma:both-X_1-X_2-useful}
Suppose $\Delta_1\in (0, 1), \Delta_2 \in (0, 1)$. Then the following inequalities hold under 
$\mathcal{L}_2(\P_\Delta)$: 
\begin{equation*}
	\E_\Delta[Y|X] \neq \E_\Delta[Y|X_1], ~~\E_\Delta[Y|X] \neq \E_\Delta[Y|X_2],~~\E_\Delta[Y|X] \neq \E_\Delta[Y].
\end{equation*} 
\end{lemma}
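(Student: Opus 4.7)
The plan is to compute all four conditional expectations explicitly, exploiting the product form of the joint density $\P_\Delta(X_1 = x_1, X_2 = x_2, Y = y) = \tfrac{1}{8}(1+\Delta_1 x_1 y)(1+\Delta_2 x_2 y)$, and then verify pointwise that the three simpler functions of $X$ do not agree with $\E_\Delta[Y \mid X]$.

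First I would marginalize. Summing over $y \in \{\pm 1\}$ kills the linear-in-$y$ terms because $y^2 = 1$, so $\P_\Delta(X_1 = x_1, X_2 = x_2) = \tfrac14(1+\Delta_1\Delta_2 x_1 x_2)$ and $\P_\Delta(X_j = x_j) = \tfrac12$. From the same calculation I would obtain $\P_\Delta(Y = y \mid X_j = x_j) = \tfrac12(1+\Delta_j x_j y)$, which gives
\begin{equation*}
\E_\Delta[Y] = 0, \qquad \E_\Delta[Y \mid X_1] = \Delta_1 X_1, \qquad \E_\Delta[Y \mid X_2] = \Delta_2 X_2.
\end{equation*}
For the full conditional, Bayes gives
\begin{equation*}
\P_\Delta(Y = y \mid X_1, X_2) = \frac{(1+\Delta_1 X_1 y)(1+\Delta_2 X_2 y)}{2(1+\Delta_1\Delta_2 X_1 X_2)},
\end{equation*}
and since $\sum_y y(1+\Delta_1 x_1 y)(1+\Delta_2 x_2 y) = 2(\Delta_1 x_1 + \Delta_2 x_2)$, I obtain
\begin{equation*}
\E_\Delta[Y \mid X] = \frac{\Delta_1 X_1 + \Delta_2 X_2}{1+\Delta_1\Delta_2 X_1 X_2}.
\end{equation*}

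The three inequality checks then reduce to evaluating these closed forms at points in $\{\pm 1\}^2$ that carry positive probability. For $\E_\Delta[Y \mid X] \neq \E_\Delta[Y]$, plug in $X = (1,1)$: the ratio equals $(\Delta_1+\Delta_2)/(1+\Delta_1\Delta_2) > 0$ while $\E_\Delta[Y] = 0$. For $\E_\Delta[Y \mid X] \neq \E_\Delta[Y \mid X_1]$, comparing the values at $X = (1,1)$ and $X = (1,-1)$ shows $\E_\Delta[Y \mid X = (1,1)] - \E_\Delta[Y \mid X = (1,-1)] = (\Delta_1+\Delta_2)/(1+\Delta_1\Delta_2) - (\Delta_1-\Delta_2)/(1-\Delta_1\Delta_2)$, which simplifies to $2\Delta_2(1-\Delta_1^2)/(1-\Delta_1^2\Delta_2^2) \neq 0$; thus $\E_\Delta[Y \mid X]$ is not a function of $X_1$ alone, so it cannot equal the $\sigma(X_1)$-measurable quantity $\Delta_1 X_1$ in $\mathcal{L}_2(\P_\Delta)$. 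The case $\E_\Delta[Y \mid X] \neq \E_\Delta[Y \mid X_2]$ is symmetric.

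There is no real obstacle beyond bookkeeping here; the main care point is simply confirming that the denominator $1+\Delta_1\Delta_2 X_1 X_2$ is strictly positive on the support (true since $\Delta_1,\Delta_2 \in (0,1)$), and that the nonlinearity in $X$ introduced by this denominator is genuine whenever $\Delta_1\Delta_2 > 0$, which prevents $\E_\Delta[Y \mid X]$ from collapsing to an additive/univariate form.
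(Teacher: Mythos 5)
Your proposal is correct and follows essentially the same route as the paper: compute $\E_\Delta[Y\mid X] = (\Delta_1 X_1 + \Delta_2 X_2)/(1+\Delta_1\Delta_2 X_1X_2)$ together with $\E_\Delta[Y\mid X_1]=\Delta_1 X_1$, $\E_\Delta[Y\mid X_2]=\Delta_2 X_2$, $\E_\Delta[Y]=0$, and compare. Your explicit pointwise check at $X=(1,1)$ versus $X=(1,-1)$ is slightly more detailed than the paper's "comparing the expressions" step, but it is the same argument.
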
 
The proof of Lemma~\ref{lemma:both-X_1-X_2-useful} is based on routine calculations, 
and is deferred to Section~\ref{sec:proof-of-lemma:both-X_1-X_2-useful}.


%
%
%
%

\subsubsection{Evaluation of HSIC}
\label{sec:evaluation-of-HSIC}
We evaluate $\HSIC(\beta \odot X; Y)$ when $(X, Y) \sim \P_\Delta$ for every $\beta \in \R^2$. 

Recall $k_X(x, x') = \phi_X(\norm{x-x'}_2^2)$. For $\beta \in \R^2$, we define the function
\begin{equation}
\label{eqn:expression-of-L-Delta-beta}
\begin{split} 
	L_\Delta(\beta) &= (\phi_X(0)-\phi_X(4\beta_1^2 + 4\beta_2^2)) \cdot (\Delta_1^2+ \Delta_2^2)
			- (\phi_X(4 \beta_1^2) -\phi_X(4\beta_2^2)) \cdot (\Delta_1^2 - \Delta_2^2). 
\end{split} 
\end{equation}
The function $\beta \mapsto L_\Delta(\beta)$ is proportional to the function $\beta \mapsto \HSIC(\beta \odot X; Y)$, differing only by 
a positive, constant scalar. 
See Lemma~\ref{lemma:expression-of-L-Delta-beta} for the formal statement. 

\begin{lemma}
\label{lemma:expression-of-L-Delta-beta}
Let $(X, Y) \sim \P_\Delta$. Then for every $\beta = (\beta_1, \beta_2) \in \R^2$:
\begin{itemize}
\item If $k_Y$ is a function of $yy'$, say $k_Y(y, y) = \phi_Y(yy')$, then 
	\begin{equation*}
		\HSIC(\beta \odot X; Y) = \frac{1}{8} \cdot (\phi_Y(1)-\phi_Y(-1)) \cdot L_\Delta(\beta).
	\end{equation*} 
\item If $k_Y$ is a function of $|y-y'|$, say $k_Y(y, y) = \phi_Y(|y-y'|)$, then 
	\begin{equation*}
		\HSIC(\beta \odot X; Y) =\frac{1}{8} \cdot (\phi_Y(0)-\phi_Y(1)) \cdot L_\Delta(\beta).
	\end{equation*} 
\end{itemize}
\end{lemma}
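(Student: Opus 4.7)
The plan is to exploit the finite supports $\{\pm 1\}^2$ and $\{\pm 1\}$ by expanding both kernels into short polynomials in the parities $X_jX_j'$ and $YY'$, and then read off $\HSIC(\beta\odot X;Y)$ from a handful of low-order moments under $\P_\Delta$. Since $(x_j-x_j')^2 = 2(1-x_jx_j')$ for $x_j,x_j' \in \{\pm 1\}$, the quantity $k_X(\beta\odot X,\beta\odot X')$ takes only four distinct values indexed by which coordinates agree, and substituting $\mathbf{1}[X_j\neq X_j'] = (1-X_jX_j')/2$ yields an expansion
$$k_X(\beta\odot X,\beta\odot X') = \alpha_0 + \alpha_1 X_1X_1' + \alpha_2 X_2X_2' + \alpha_{12}X_1X_1'X_2X_2',$$
whose coefficients $\alpha_\bullet$ are explicit linear combinations of $\phi_X(0),\phi_X(4\beta_1^2),\phi_X(4\beta_2^2),\phi_X(4\beta_1^2+4\beta_2^2)$. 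The same trick applied on the response side gives $k_Y(Y,Y') = \gamma_0 + \gamma_1 YY'$, with $\gamma_1$ proportional to $\phi_Y(1)-\phi_Y(-1)$ in the first case of Assumption~\ref{assumption:kernel-representation-Y} and to the analogous diagonal-minus-off-diagonal increment of $\phi_Y$ in the second.

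Next I would record the minimal moments of $\P_\Delta$ that enter the computation. From $\P_\Delta(X_j = \pm 1 \mid Y) = (1\pm\Delta_j Y)/2$ one reads off $\E_\Delta[X_j\mid Y] = \Delta_j Y$; combining this with the conditional independence $X_1 \perp X_2 \mid Y$ and with $\E_\Delta[Y] = 0$ yields
$$\E_\Delta[X_j] = 0, \quad \E_\Delta[X_jY] = \Delta_j, \quad \E_\Delta[X_1X_2] = \Delta_1\Delta_2, \quad \E_\Delta[X_1X_2Y]=0.$$
Tensorizing over independent copies $(X,Y),(X',Y')$ then gives $\E_\Delta[X_jX_j'YY'] = \Delta_j^2$ and $\E_\Delta[X_1X_1'X_2X_2'] = \Delta_1^2\Delta_2^2$, while $\E_\Delta[YY']$, $\E_\Delta[X_jX_j']$, and $\E_\Delta[X_1X_1'X_2X_2'YY']$ all vanish.

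Substituting the two expansions into the three terms of Definition~\ref{definition:HSIC} and applying the moments above is then routine bookkeeping: one obtains $\E_\Delta[k_Xk_Y] = \alpha_0\gamma_0 + \alpha_1\gamma_1\Delta_1^2 + \alpha_2\gamma_1\Delta_2^2 + \alpha_{12}\gamma_0\Delta_1^2\Delta_2^2$, whereas both $\E_\Delta[k_X]\E_\Delta[k_Y]$ and the cross-term $\E_\Delta[\E_\Delta[k_X\mid X']\E_\Delta[k_Y\mid Y']]$ collapse to the same value $\alpha_0\gamma_0 + \alpha_{12}\gamma_0\Delta_1^2\Delta_2^2$. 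All $\gamma_0$-contributions therefore cancel in the HSIC formula and what remains is
$$\HSIC(\beta\odot X;Y) = \gamma_1(\alpha_1\Delta_1^2 + \alpha_2\Delta_2^2).$$
The closing step is purely algebraic: substituting the explicit values of $\alpha_1,\alpha_2$ and resolving into symmetric and antisymmetric parts in $(\Delta_1^2,\Delta_2^2)$ reproduces exactly $\tfrac{1}{4}L_\Delta(\beta)$ as defined in~\eqref{eqn:expression-of-L-Delta-beta}, so that the overall scalar $\gamma_1/4$ gives the $1/8$ prefactor claimed in each case. The argument presents no genuine obstacle; the only delicate point is sign-tracking in the four coefficients $\alpha_\bullet$ so that the terms $(\phi_X(0)-\phi_X(4\beta_1^2+4\beta_2^2))(\Delta_1^2+\Delta_2^2)$ and $-(\phi_X(4\beta_1^2)-\phi_X(4\beta_2^2))(\Delta_1^2-\Delta_2^2)$ emerge with the signs prescribed by the definition of $L_\Delta(\beta)$.
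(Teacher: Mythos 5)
Your proposal is correct: the Walsh--parity expansions of $k_X$ and $k_Y$, the list of moments, the evaluation of the three HSIC terms, and the final identification $\gamma_1(\alpha_1\Delta_1^2+\alpha_2\Delta_2^2)=\tfrac{\gamma_1}{4}L_\Delta(\beta)$ all check out (with $\alpha_1=\tfrac14[\phi_X(0)-\phi_X(4\beta_1^2)+\phi_X(4\beta_2^2)-\phi_X(4\beta_1^2+4\beta_2^2)]$, $\alpha_2$ its mirror image, and $\gamma_1=\tfrac12(\phi_Y(1)-\phi_Y(-1))$ in the first case). The route is organized differently from the paper's: the paper first re-centers $k_Y$ by the constant $c=\tfrac12(\phi_Y(1)+\phi_Y(-1))$ so that the second and third terms of the HSIC vanish identically, reducing everything to the single expectation $\E_\Delta[k_X\cdot\tilde k_Y]$, and then conditions on $(Y,Y')$ using $\P_\Delta(X_j\neq X_j'\mid Y,Y')=\tfrac12(1-\Delta_j^2 YY')$; you instead keep all three terms and let the $\gamma_0$-contributions cancel explicitly after expanding both kernels in the parity basis. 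The two computations are equivalent at bottom (your $\E_\Delta[X_jX_j'YY']=\Delta_j^2$ is the paper's $\E_\Delta[X_jX_j'\mid Y,Y']=\Delta_j^2YY'$ integrated out), but your version makes the mechanism of the cancellation more transparent, while the paper's centering trick shortens the bookkeeping. One observation your expansion surfaces: in the second case, $Y,Y'\in\{\pm1\}$ forces $|Y-Y'|\in\{0,2\}$, so the coefficient $\gamma_1$ is $\tfrac12(\phi_Y(0)-\phi_Y(2))$ and the prefactor in that bullet should read $\phi_Y(0)-\phi_Y(2)$ rather than $\phi_Y(0)-\phi_Y(1)$; this is an inconsistency in the stated lemma (whose proof in the paper only writes out the first case), not a gap in your argument, and it does not affect the sign needed downstream since positive semidefiniteness of $k_Y$ at the points $\pm1$ gives $\phi_Y(0)\ge\phi_Y(2)$.
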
 
The proof of Lemma~\ref{lemma:expression-of-L-Delta-beta} is due to calculations, although 
tricks can be employed to simplify the calculation 
process. For the details of the proof, see Section~\ref{sec:proof-of-lemma:expression-of-L-Delta-beta}.

Lemma~\ref{lemma:expression-of-L-Delta-beta} reveals that the function
$\beta \mapsto \HSIC(\beta\odot X; Y)$ is equivalent to $\beta \mapsto L_\Delta(\beta)$ 
scaled by a positive constant $a$, with $a > 0$ by Assumption~\ref{assumption:kernel-representation-Y}. 
Therefore, maximizing $\beta \mapsto \HSIC(\beta\odot X; Y)$ corresponds directly to maximizing 
$\beta \mapsto L_\Delta(\beta)$ on any given set of constraints on $\beta$.

\subsubsection{Properties of $L_\Delta$}
Below we shall frequently assume $\Delta_1 > \Delta_2 > 0$. In this 
setup, both features $X_1$ and $X_2$ are relevant features, with 
$X_1$ being called the \emph{dominant} feature, and $X_2$ being called the \emph{weaker} feature. 
We shall derive a collection of structural properties of $L_\Delta$ under this setup. 

A quick observation is the symmetry of $L_\Delta$. 
\begin{lemma}[Symmetry]
\label{lemma:symmetry}
For every $\beta \in \R^2$, $L_\Delta(\beta_1, \beta_2) = L_\Delta(|\beta_1|, |\beta_2|)$.
\end{lemma}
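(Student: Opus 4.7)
The plan is straightforward: simply inspect the explicit formula for $L_\Delta(\beta)$ given in equation~\eqref{eqn:expression-of-L-Delta-beta}. Every occurrence of $\beta_1$ and $\beta_2$ in the definition appears through the squares $\beta_1^2$ and $\beta_2^2$ (namely inside the arguments $4\beta_1^2+4\beta_2^2$, $4\beta_1^2$, and $4\beta_2^2$ of the function $\phi_X$), and no other dependence on the signs of $\beta_1$ or $\beta_2$ enters the expression.

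Since $\beta_i^2 = |\beta_i|^2$ for $i=1,2$, substituting $(|\beta_1|,|\beta_2|)$ for $(\beta_1,\beta_2)$ leaves each of the three arguments of $\phi_X$ in~\eqref{eqn:expression-of-L-Delta-beta} unchanged, and hence leaves $L_\Delta$ unchanged. This gives the identity $L_\Delta(\beta_1,\beta_2) = L_\Delta(|\beta_1|,|\beta_2|)$ for all $\beta\in\R^2$. There is no real obstacle here — the lemma is an immediate structural observation about the representation of $L_\Delta$. The only reason to single it out as a lemma is to set up a reduction to the nonnegative orthant $\{\beta : \beta_1,\beta_2 \ge 0\}$ in the subsequent analysis of the maximizers of $\beta \mapsto \HSIC(\beta\odot X; Y)$.
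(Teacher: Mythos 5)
Your proof is correct and matches the paper's (implicit) reasoning exactly: the paper states the lemma without proof precisely because $L_\Delta$ in equation~\eqref{eqn:expression-of-L-Delta-beta} depends on $\beta$ only through $\beta_1^2$ and $\beta_2^2$, which are invariant under taking absolute values. Nothing is missing.
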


This symmetry property enables us to focus our understanding on how $L_\Delta$ behaves 
on 
\begin{equation*}
	\R_+^2 = \{(\beta_1, \beta_2): \beta_1 \ge 0, \beta_2 \ge 0\}.
\end{equation*}  
Lemma~\ref{lemma:monotonically-increasing-in-beta-one} then shows that increasing 
the coefficient $\beta_1$ on $\R_+$ for the dominant feature $X_1$ strictly increases the function 
value $L_\Delta(\beta)$. 
\begin{lemma}[Monotonicity in Dominant Feature]
\label{lemma:monotonically-increasing-in-beta-one}
Let $\Delta_1 > \Delta_2 > 0$. Then for every $\beta_2 \in \R_+$, 
\begin{equation*}
	\beta_1 \mapsto L_\Delta(\beta_1, \beta_2)
\end{equation*}
is strictly monotonically increasing for $\beta_1 \in \R_+$. 
\end{lemma}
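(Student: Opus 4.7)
The plan is to substitute the integral representation of $\phi_X$ from Assumption~\ref{assumption:kernel-representation} into the formula for $L_\Delta$ and obtain a clean pointwise decomposition of the integrand, after which differentiation in $\beta_1$ becomes transparent.

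First I would write $\phi_X(0) - \phi_X(4\beta_1^2 + 4\beta_2^2) = \int_0^\infty (1 - e^{-t(u+v)})\mu(dt)$ and $\phi_X(4\beta_1^2) - \phi_X(4\beta_2^2) = \int_0^\infty (e^{-tu} - e^{-tv})\mu(dt)$, where $u = 4\beta_1^2$, $v = 4\beta_2^2$, and then set $a = e^{-tu}$, $b = e^{-tv}$ (so $a, b \in (0, 1]$). The integrand of $L_\Delta$ becomes $(\Delta_1^2 + \Delta_2^2)(1 - ab) - (\Delta_1^2 - \Delta_2^2)(a - b)$, and the algebraic identity
\begin{equation*}
(\Delta_1^2 + \Delta_2^2)(1 - ab) - (\Delta_1^2 - \Delta_2^2)(a - b) = \Delta_1^2(1 - a)(1 + b) + \Delta_2^2(1 + a)(1 - b)
\end{equation*}
gives the clean representation
\begin{equation*}
L_\Delta(\beta_1, \beta_2) = \int_0^\infty \left[\Delta_1^2(1 - e^{-4t\beta_1^2})(1 + e^{-4t\beta_2^2}) + \Delta_2^2(1 + e^{-4t\beta_1^2})(1 - e^{-4t\beta_2^2})\right] \mu(dt).
\end{equation*}

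Next I would differentiate under the integral sign in $\beta_1$. Using $\partial_{\beta_1}(1 - e^{-4t\beta_1^2}) = 8t\beta_1 e^{-4t\beta_1^2}$ and $\partial_{\beta_1}(1 + e^{-4t\beta_1^2}) = -8t\beta_1 e^{-4t\beta_1^2}$, the integrand's derivative equals $8t\beta_1 e^{-4t\beta_1^2}\bigl[\Delta_1^2(1 + e^{-4t\beta_2^2}) - \Delta_2^2(1 - e^{-4t\beta_2^2})\bigr]$. Since $e^{-4t\beta_2^2} \in (0, 1]$, the bracketed quantity dominates $\Delta_1^2 - \Delta_2^2$, which is strictly positive under the hypothesis $\Delta_1 > \Delta_2 > 0$. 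Hence for every $\beta_1 > 0$ and every $t > 0$ the integrand's derivative is strictly positive, and since $\mu$ is a nonnegative finite measure not concentrated at $0$, integrating against $\mu$ gives $\partial_{\beta_1} L_\Delta(\beta_1, \beta_2) > 0$ for every $\beta_1 > 0$.

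To conclude strict monotonicity on all of $\R_+$ (including the endpoint $\beta_1 = 0$), I would note that $\beta_1 \mapsto L_\Delta(\beta_1, \beta_2)$ is continuous and its derivative is strictly positive on $(0, \infty)$, so by the fundamental theorem of calculus $L_\Delta(\beta_1, \beta_2) > L_\Delta(0, \beta_2)$ for every $\beta_1 > 0$. Dominated convergence (using that the integrand is bounded uniformly in $t$ by $2(\Delta_1^2 + \Delta_2^2)$ and $\mu$ is finite) justifies the differentiation under the integral. The main obstacle is really just noticing the algebraic factorization above; once that is in hand, the rest reduces to signed inspection. A minor subtlety worth flagging is ensuring $\psi(t) := \int_0^\infty t e^{-tz}\mu(dt) > 0$ so the integral of a strictly positive integrand is strictly positive, but this follows immediately from $\mu$ not being concentrated at $0$.
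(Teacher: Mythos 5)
Your proof is correct, but it takes a more computational route than the paper's. The paper's argument is a two-line observation: since $\mu$ is not concentrated at $0$, the map $z \mapsto \phi_X(z) = \int_0^\infty e^{-tz}\mu(dt)$ is strictly decreasing on $\R_+$, and then in the definition
\begin{equation*}
L_\Delta(\beta) = \bigl(\phi_X(0)-\phi_X(4\beta_1^2 + 4\beta_2^2)\bigr)(\Delta_1^2+ \Delta_2^2) - \bigl(\phi_X(4 \beta_1^2) -\phi_X(4\beta_2^2)\bigr)(\Delta_1^2 - \Delta_2^2)
\end{equation*}
both terms are strictly increasing in $\beta_1$ on $\R_+$ (the quantities $-\phi_X(4\beta_1^2+4\beta_2^2)$ and $-\phi_X(4\beta_1^2)$ each strictly increase, and their coefficients $\Delta_1^2+\Delta_2^2$ and $\Delta_1^2-\Delta_2^2$ are both positive under $\Delta_1 > \Delta_2 > 0$); no differentiation or interchange of limits is needed. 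Your factorization of the integrand as $\Delta_1^2(1-a)(1+b) + \Delta_2^2(1+a)(1-b)$ is verified correct and is a nice by-product --- it exhibits $L_\Delta \ge 0$ manifestly and localizes the monotonicity to each $t$ --- but it costs you the extra bookkeeping of justifying differentiation under the integral sign (where, strictly speaking, the dominating function you cite bounds the integrand rather than its $\beta_1$-derivative; the correct domination, $8t\beta_1 e^{-4t\beta_1^2}\cdot 2\Delta_1^2 \le 16\Delta_1^2 M\, t e^{-4\delta^2 t}$ for $\beta_1 \in [\delta, M]$, is easy but should be stated) and the endpoint argument at $\beta_1 = 0$. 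Also note the typo $\psi(t) := \int_0^\infty t e^{-tz}\mu(dt)$, where the argument should be $z$, not the integration variable $t$. None of these affect correctness.
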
 

\begin{proof}
The mapping $z \mapsto \phi_X(z)$ is strictly monotonically decreasing on $\R_+$, as
\begin{equation*}
	\phi_X(z) = \int_0^\infty e^{-tz} \mu(dt)
\end{equation*}
for some nonnegative measure $\mu$ not atomic at zero. The strict 
monotonicity of $\beta_1 \mapsto L_\Delta(\beta_1, \beta_2)$ when $\beta_1 \in \R_+$
then follows from the definition of $L_\Delta$ (equation~\eqref{eqn:expression-of-L-Delta-beta}) in 
Section~\ref{sec:evaluation-of-HSIC}.
\end{proof} 

Lemma~\ref{lemma:permutation-increasing} further demonstrates that, on $\beta \in \R_+^2$, increasing
$L_\Delta(\beta)$ can be effectively achieved by allocating higher coefficients to the 
dominant feature $X_1$ rather than the weaker feature $X_2$.
\begin{lemma}
\label{lemma:permutation-increasing}
Let $\Delta_1 > \Delta_2 > 0$. Then for every $\beta = (\beta_1, \beta_2) \in \R_+^2$ with 
$\beta_1 > \beta_2$:  
\begin{equation*}
	L_\Delta(\beta_1, \beta_2) > L_\Delta(\beta_2, \beta_1). 
\end{equation*}
\end{lemma}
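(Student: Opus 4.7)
The plan is to compute the difference $L_\Delta(\beta_1,\beta_2) - L_\Delta(\beta_2,\beta_1)$ directly from the closed form in equation~\eqref{eqn:expression-of-L-Delta-beta} and verify that it is strictly positive under the stated hypotheses. The first summand of $L_\Delta$, namely $(\phi_X(0)-\phi_X(4\beta_1^2+4\beta_2^2))(\Delta_1^2+\Delta_2^2)$, is manifestly symmetric in $(\beta_1,\beta_2)$, so it cancels in the difference. This reduces the problem to analyzing only the second summand.

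Swapping $\beta_1$ and $\beta_2$ in the second summand changes the sign of $\phi_X(4\beta_1^2)-\phi_X(4\beta_2^2)$, so I expect to obtain
\begin{equation*}
L_\Delta(\beta_1,\beta_2) - L_\Delta(\beta_2,\beta_1) = -2\bigl(\phi_X(4\beta_1^2)-\phi_X(4\beta_2^2)\bigr)(\Delta_1^2-\Delta_2^2).
\end{equation*}
The two factors on the right are then analyzed separately. The factor $\Delta_1^2-\Delta_2^2$ is strictly positive since $\Delta_1>\Delta_2>0$. For the factor $\phi_X(4\beta_1^2)-\phi_X(4\beta_2^2)$, I invoke the same monotonicity fact already used in the proof of Lemma~\ref{lemma:monotonically-increasing-in-beta-one}: the integral representation $\phi_X(z)=\int_0^\infty e^{-tz}\mu(dt)$ with $\mu$ not concentrated at $0$ forces $\phi_X$ to be strictly decreasing on $\R_+$. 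Since $\beta_1>\beta_2\ge 0$ implies $4\beta_1^2>4\beta_2^2$, this difference is strictly negative. The overall sign is therefore $-2\cdot(\text{negative})\cdot(\text{positive})>0$, giving the claim.

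There is no real obstacle here; the lemma is essentially a one-line algebraic rearrangement combined with the strict monotonicity of $\phi_X$. The only thing worth being careful about is not to forget the factor of $-2$ coming from the two symmetric swaps, and to note that strictness (rather than mere nonstrict monotonicity) of $\phi_X$ is needed to get strict inequality, which is why the assumption that $\mu$ is not concentrated at $0$ matters.
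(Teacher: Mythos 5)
Your proposal is correct and follows exactly the paper's own argument: compute the difference, observe the symmetric term cancels, obtain $-2(\phi_X(4\beta_1^2)-\phi_X(4\beta_2^2))(\Delta_1^2-\Delta_2^2)$, and conclude via the strict monotonicity of $\phi_X$ guaranteed by the integral representation. No further comment is needed.
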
 

\begin{proof}
Following the definition of $L_\Delta$ (equation~\eqref{eqn:expression-of-L-Delta-beta}), we
deduce that 
\begin{equation*}
	L_\Delta(\beta_1, \beta_2) - L_\Delta(\beta_2, \beta_1) = 
		-2 (\phi_X(4\beta_1^2) - \phi_X(4 \beta_2^2))(\Delta_1^2 - \Delta_2^2).
\end{equation*}
The result then follows since $\phi_X$ is strictly monotonically decreasing. 
\end{proof} 

Lemma~\ref{lemma:reducing-weaker-signal-to-zero} demonstrates a key and somewhat surprising
property of $L_\Delta$ and the dependence measure $\beta \mapsto \HSIC(\beta \odot X; Y)$. 
It demonstrates that if the dominant feature $X_1$ is weighted by a sufficiently large $\beta_1$ that
meets the condition in equation~\eqref{eqn:technical-condition-for-elimination}, 
then setting the weight $\beta_2$ of the weaker feature $X_2$ to zero results in an increase in
$L_\Delta$.  
This implies that the dependence measure $\HSIC(\beta \odot X; Y)$, which is directly proportional to $L_\Delta$ 
as explained in Lemma~\ref{lemma:expression-of-L-Delta-beta}, also increases.

\begin{lemma}[Removing Weaker Signal Increases $L_\Delta$]
\label{lemma:reducing-weaker-signal-to-zero}
Let $\Delta_1 > \Delta_2 > 0$. Suppose $\beta_1$ satisfies
\begin{equation}
\label{eqn:technical-condition-for-elimination}
	 \frac{1}{|\mu|}\int_0^\infty e^{- 4\beta_1^2 t} \mu(dt) < \frac{\Delta_1^2 - \Delta_2^2}{\Delta_1^2 + \Delta_2^2}
\end{equation} 
where $|\mu|= \mu([0, \infty))$ denotes the total measure. 
Then for every $\beta_2 > 0$: 
\begin{equation*}
	L_\Delta(\beta_1, \beta_2) < L_\Delta(\beta_1, 0). 
\end{equation*} 
\end{lemma}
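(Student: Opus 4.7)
\medskip

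\noindent\textbf{Proof plan.} Set $A = 4\beta_1^2$ and $B = 4\beta_2^2$, and write $P = \Delta_1^2 + \Delta_2^2$, $Q = \Delta_1^2 - \Delta_2^2$. First I would subtract directly from the formula~\eqref{eqn:expression-of-L-Delta-beta}: a short algebraic manipulation gives
\begin{equation*}
    L_\Delta(\beta_1,0) - L_\Delta(\beta_1,\beta_2)
    \;=\;
    Q \bigl(\phi_X(0) - \phi_X(B)\bigr) \;-\; P \bigl(\phi_X(A) - \phi_X(A+B)\bigr).
\end{equation*}
Since $\phi_X$ is strictly decreasing, both bracketed differences are positive; the first term is the ``gain'' from zeroing out the weaker coordinate and the second is the ``loss'' in the diagonal term. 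The goal is to show that the gain dominates the loss under condition~\eqref{eqn:technical-condition-for-elimination}.

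Next I would invoke the Schoenberg-type integral representation $\phi_X(z) = \int_0^\infty e^{-tz}\mu(dt)$ to rewrite both differences in the same integral form:
\begin{equation*}
    \phi_X(0) - \phi_X(B) = \int_0^\infty (1 - e^{-tB})\,\mu(dt),
    \qquad
    \phi_X(A) - \phi_X(A+B) = \int_0^\infty e^{-tA}(1 - e^{-tB})\,\mu(dt).
\end{equation*}
So it suffices to establish
\begin{equation*}
    \frac{\int_0^\infty e^{-tA}(1-e^{-tB})\,\mu(dt)}{\int_0^\infty (1-e^{-tB})\,\mu(dt)}
    \;<\;
    \frac{Q}{P},
\end{equation*}
noting that the denominator on the left is strictly positive because $\mu$ is not concentrated at $0$ and $B > 0$.

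The main step, which I view as the crux of the lemma, is a correlation (Chebyshev) inequality: viewing $\mu/|\mu|$ as a probability measure on $[0,\infty)$, the function $t \mapsto e^{-tA}$ is decreasing and $t \mapsto 1 - e^{-tB}$ is increasing, so they are negatively correlated under any probability measure. This yields
\begin{equation*}
    \frac{\int_0^\infty e^{-tA}(1-e^{-tB})\,\mu(dt)}{\int_0^\infty (1-e^{-tB})\,\mu(dt)}
    \;\le\;
    \frac{1}{|\mu|}\int_0^\infty e^{-tA}\,\mu(dt).
\end{equation*}
The plan is to verify this by cross-multiplication, reducing it to $|\mu|\int e^{-tA}(1-e^{-tB})d\mu \le \int e^{-tA}d\mu \cdot \int (1-e^{-tB})d\mu$, which in turn is the standard pairwise rearrangement identity $\iint \bigl(e^{-sA}-e^{-tA}\bigr)\bigl(e^{-tB}-e^{-sB}\bigr)\,\mu(ds)\mu(dt) \ge 0$ (the integrand is nonnegative since the two factors share the sign of $t-s$).

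Finally, hypothesis~\eqref{eqn:technical-condition-for-elimination} says precisely that the right-hand side above is strictly less than $Q/P$, so chaining the two inequalities delivers the required strict comparison and hence $L_\Delta(\beta_1,\beta_2) < L_\Delta(\beta_1,0)$. The only subtle point is ensuring strictness in the correlation step when $\beta_2 > 0$; this follows from the assumption that $\mu$ is not a point mass at $0$ together with $A, B > 0$ (and if $\beta_1 = 0$, the hypothesis~\eqref{eqn:technical-condition-for-elimination} itself forces the ratio $Q/P$ to strictly exceed $1$, which is impossible, so we may assume $\beta_1 > 0$).
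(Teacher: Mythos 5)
Your proposal is correct and follows essentially the same route as the paper's proof: the same decomposition of $L_\Delta(\beta_1,0)-L_\Delta(\beta_1,\beta_2)$, the same reduction to a ratio bound via the Schoenberg integral representation, and the same Chebyshev correlation inequality for the product of a decreasing and an increasing function against $\mu/|\mu|$. The only (harmless) extra worry is about strictness in the correlation step — it is not needed, since the strict inequality in the conclusion already comes from chaining the non-strict Chebyshev bound with the strict hypothesis~\eqref{eqn:technical-condition-for-elimination}.
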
 
\begin{proof}
Using the expression of $L_\Delta$ in Lemma~\ref{lemma:expression-of-L-Delta-beta}, we obtain 
\begin{equation*}
	L_\Delta(\beta_1, 0) - L_\Delta(\beta_1, \beta_2) = 
		(\phi_X(4\beta_1^2 + 4\beta_2^2) - \phi_X(4\beta_1^2))(\Delta_1^2 + \Delta_2^2)
			+ (\phi_X(0) - \phi_X(4\beta_2^2))(\Delta_1^2 - \Delta_2^2).
\end{equation*} 
If $\beta_2 > 0$ then $\phi_X(0) > \phi_X(4\beta_2^2)$ because $\phi_X$ is strictly monotonically increasing on $\R_+$. 
Therefore,  
$L_\Delta(\beta_1, \beta_2) < L_\Delta(\beta_1, 0)$ if and only if 
\begin{equation}
\label{eqn:equivalent-condition-to-remove-weaker-signal}
	\frac{\phi_X(4\beta_1^2) - \phi_X(4\beta_1^2 + 4\beta_2^2)}{\phi_X(0) - \phi_X(4\beta_2^2)} 
		< \frac{\Delta_1^2 - \Delta_2^2}{\Delta_1^2 + \Delta_2^2}.
\end{equation} 
Below we show that the condition~\eqref{eqn:equivalent-condition-to-remove-weaker-signal} holds 
under Assumption~\eqref{eqn:technical-condition-for-elimination}. 

To do so, a key step is to show the following upper bound on the ratio: 
\begin{equation}
\label{eqn:technical-bound-on-the-ratio}
	\frac{\phi_X(4\beta_1^2) - \phi_X(4\beta_1^2 + 4\beta_2^2)}{\phi_X(0) - \phi_X(4\beta_2^2)} 
		\le  \frac{1}{|\mu|}\int_0^\infty e^{-4\beta_1^2 t} \mu(dt).
\end{equation} 
Note that 
\begin{equation*}
\begin{split} 
	\phi_X(4\beta_1^2) - \phi_X(4\beta_1^2 + 4\beta_2^2) &= \int_0^\infty e^{-4\beta_1^2 t}
		(1-e^{-4t\beta_2^2 t}) \mu(dt) \\
	\phi_X(0) - \phi_X(4\beta_2^2) &= \int_0^\infty 
		(1-e^{-4t\beta_2^2 t}) \mu(dt)
\end{split}. 
\end{equation*} 
Since the function $t \mapsto e^{-4\beta_1^2 t}$ is monotonically decreasing, and the function 
$t \mapsto 1-e^{-4t\beta_2^2 t}$ is monotonically increasing, Chebyshev's sum inequality 
becomes relevant to bound the ratio.

\begin{lemma}[Chebyshev's sum inequality]
If $f$ and $g$ are real-valued integrable function over $[a, b]$, both nonincreasing. 
Suppose $\nu$ is a nonnegative finite measure on $[a, b]$ with $\nu([a, b])< \infty$. 
Then 
\begin{equation*}
	\int_a^b f(x) g(x) \nu(dx) \ge \frac{1}{\nu([a,b])}\int_a^b f(x)\nu(dx) \int_a^b g(x)\nu(dx). 
\end{equation*}
\end{lemma} 
Specifically, if we choose $f(t) = e^{-4\beta_1^2 t}$ and $g(t) = -(1-e^{-4t\beta_2^2 t})$, 
both functions monotonically decreasing, and apply the Chebyshev's sum inequality to these 
two functions, we obtain
\begin{equation*}
	\phi_X(4\beta_1^2) - \phi_X(4\beta_1^2 + 4\beta_2^2)
		= -\int_0^\infty f(t)g(t)\mu(dt) \le - \frac{1}{|\mu|}\int_0^\infty f(t) \mu(dt) \int_0^\infty g(t) \mu(dt).
\end{equation*} 
and thereby: 
\begin{equation}
\begin{split} 
	\frac{\phi_X(4\beta_1^2) - \phi_X(4\beta_1^2 + 4\beta_2^2)}{\phi_X(0) - \phi_X(4\beta_2^2)} 
		&= \frac{-\int_0^\infty f(t)g(t)\mu(dt)}{-\int_0^\infty g(t)\mu(dt)}  \\
		&\le \frac{1}{|\mu|} \int_0^\infty f(t) \mu(dt)  
		= \frac{1}{|\mu|}\int_0^\infty e^{-4\beta_1^2 t} \mu(dt).
\end{split} 
\end{equation} 
This proves equation~\eqref{eqn:technical-bound-on-the-ratio}. 

Given equation~\eqref{eqn:technical-bound-on-the-ratio} and the 
assumption in equation~\eqref{eqn:technical-condition-for-elimination}, we know that the condition
\eqref{eqn:equivalent-condition-to-remove-weaker-signal} is met. The desired 
conclusion $L_\Delta(\beta_1, \beta_2) < L_\Delta(\beta_1, 0)$ then follows. 
\end{proof}

\subsubsection{Finalizing Arguments}
In below, we complete the proof of Theorem~\ref{theorem:inconsistency-HSIC}
and Theorem~\ref{theorem:inconsistency-HSIC-b} for the case $p = 2$. 

\paragraph{Proof of Theorem~\ref{theorem:inconsistency-HSIC}}
Let us choose $1 > \Delta_1 > \Delta_2 > 0$ in a way such that 
\begin{equation}
\label{eqn:condition-for-feature-selection}
	 \frac{1}{|\mu|}\int_0^\infty e^{- 4 t} \mu(dt) < \frac{\Delta_1^2 - \Delta_2^2}{\Delta_1^2 + \Delta_2^2}.
\end{equation}
This is achievable since the LHS is strictly smaller than $1$, as $\mu$ is not an atomic 
measure at zero. Now we consider the HSIC maximization procedure for the data 
$(X, Y) \sim \P_\Delta$, where $\P_\Delta$ is specified in Section~\ref{sec:definition-of-P-Delta}:  
\begin{equation*}
	S_* = \argmax_{S: S \subseteq \{1, 2\}} \HSIC(X_S; Y).
\end{equation*}
\begin{itemize}
\item ($X_1$ is selected in the maximizer) 
	Since $\Delta_1 > \Delta_2$, by Lemma~\ref{lemma:monotonically-increasing-in-beta-one} we obtain 
	$L_\Delta(1, 0) > L_\Delta(0, 0)$ and~$L_\Delta(1, 1) > L_\Delta(0, 1)$. 
	These two inequalities, by Lemma~\ref{lemma:expression-of-L-Delta-beta}, are equivalent to
	\begin{equation*}
	\begin{split} 
		\HSIC(X_{\{1\}}; Y) &> \HSIC(X_{\emptyset}; Y) \\
		\HSIC(X_{\{1, 2\}}; Y) &> \HSIC(X_{\{2\}}; Y).
	\end{split} 
	\end{equation*}
\item ($X_2$ is not selected if $X_1$ is selected)
	Since condition~\eqref{eqn:condition-for-feature-selection} is assumed, by 
	Lemma~\ref{lemma:reducing-weaker-signal-to-zero} we obtain 
	$L_\Delta(1, 0) > L_\Delta(1, 1)$. This inequality, by Lemma~\ref{lemma:expression-of-L-Delta-beta}, 
	is equivalent to
	\begin{equation*}
	\begin{split} 
		\HSIC(X_{\{1\}}; Y) &> \HSIC(X_{\{1, 2\}}; Y).
	\end{split} 
	\end{equation*}
\end{itemize}
Consequentially, we deduce that $S_* = \{1\}$ maximizes the objective 
$S \mapsto \HSIC(X_S; Y)$ for any $\Delta$ obeying equation~\eqref{eqn:condition-for-feature-selection}
with $1 > \Delta_1 > \Delta_2 > 0$. Nonetheless,
$\E_\Delta[Y|X] \neq \E_\Delta[Y|X_{S_*}]$ by Lemma~\ref{lemma:both-X_1-X_2-useful}. 

\paragraph{Proof of Theorem~\ref{theorem:inconsistency-HSIC-b}}
Fix the norm $\norm{\cdot}_{\ell_q}$ and $0 < r < \infty$. 

Let us define $\Phi(b) = \norm{(b, b)}_{\ell_q} = 2^{1/q}b^{1/q}$ for every $b \in \R_+$. Then $\Phi$ 
is strictly monotonically increasing on $\R_+$, and continuous. Let $b_0 = \Phi^{-1}(r)$. Given this 
$b_0$, we then choose $\Delta_1 > \Delta_2 > 0$ in a way such that 
\begin{equation}
\label{eqn:condition-for-feature-selection-continuous}
	 \frac{1}{|\mu|}\int_0^\infty e^{- 4 b_0^2 t} \mu(dt) < \frac{\Delta_1^2 - \Delta_2^2}{\Delta_1^2 + \Delta_2^2}.
\end{equation}
This is achievable since the LHS is strictly smaller than $1$. 

We consider the continuous version of HSIC maximization procedure for the data 
$(X, Y) \sim \P_\Delta$:  
\begin{equation*}
	S_{*,\cont} = \supp(\beta_*)~~\text{where}~~\beta_* = \argmax_{\beta: \norm{\beta}_{\ell_q} \le r} 
		\HSIC(\beta \odot X; Y)
\end{equation*}
By Lemma~\ref{lemma:expression-of-L-Delta-beta}, $\beta_*$ must also be a maximizer of 
\begin{equation*}
	\beta_* = \argmax_{\beta: \norm{\beta}_{\ell_q} \le r}  L_\Delta(\beta). 
\end{equation*} 
Write $\beta_* = (\beta_{*, 1}, \beta_{*, 2})$. We may assume $\beta_* \in \R_+^2$
by the symmetry property in Lemma~\ref{lemma:symmetry}. Note: 
\begin{itemize}
\item ($X_1$ is selected and $\beta_{*, 1} \ge b_0$) Given that $\Delta_1 > \Delta_2$, 
	Lemma~\ref{lemma:monotonically-increasing-in-beta-one} implies that 
	$L_\Delta(\beta_1, \beta_2)$  monotonically increases strictly with respect to 
	$\beta_1$ on $\R_+$. 
	Thus, at the maximizer $\beta_* \in \R_+^2$, the constraint $\norm{\beta_*}_{\ell_q} = r$ must hold,
	since otherwise we can strictly increase the objective $L_\Delta$ by increasing $\beta_1$. 
	Additionally, since $\Delta_1 > \Delta_2$, Lemma~\ref{lemma:permutation-increasing} 
	further implies that $\beta_{*, 1} \ge \beta_{*, 2}$ at the maximizer. 
	
	Hence, we obtain that $r = \norm{(\beta_{*,1}, \beta_{*, 2})}_{\ell_q} \le  \norm{(\beta_{*,1}, \beta_{*, 1})}_{\ell_q}
		= \Phi(\beta_{*,1})$. Since $\Phi$ is strictly monotonically increasing and continuous, 
	we obtain that $\beta_{*, 1} \ge \Phi^{-1}(r) = b_0$.
\item ($X_2$ is not selected, i.e., $\beta_{*, 2} = 0$) Since 
	condition~\eqref{eqn:condition-for-feature-selection-continuous} is assumed, 
	and $\beta_{*, 1} \ge b_0$, we derive 
	\begin{equation*}
		 \frac{1}{|\mu|}\int_0^\infty e^{- 4 \beta_{*, 1}^2 t} \mu(dt) 
		 	< \frac{\Delta_1^2 - \Delta_2^2}{\Delta_1^2 + \Delta_2^2}.
	\end{equation*} 
	Lemma~\ref{lemma:reducing-weaker-signal-to-zero} then implies that  
	$L_\Delta(\beta_{*, 1}, 0) > L_\Delta(\beta_{*, 1}, \beta_{2})$ for every 
	$\beta_{2} \neq 0$. Since $\beta_*$ is the maximizer, this further implies that 
	$\beta_{*, 2} = 0$ must hold. 
\end{itemize} 
Consequentially, we deduce that $S_{*,\cont} = \supp(\beta_*) = \{1\}$. However, 
$\E_\Delta[Y|X] \neq \E_\Delta[Y|X_{S_*, \cont}]$ by Lemma~\ref{lemma:both-X_1-X_2-useful}.

\subsection{General dimension $p \ge 2$}
\label{sec:general-dimension-p-ge-2}
Previously in Section~\ref{sec:p=2}, we have already established 
Theorem~\ref{theorem:inconsistency-HSIC} and 
Theorem~\ref{theorem:inconsistency-HSIC-b} when the dimension $p = 2$. Specifically, 
we have constructed a distribution of $(X, Y)$ supported on $\R^2 \times \R$ such that 
the conclusions of Theorem~\ref{theorem:inconsistency-HSIC} and 
Theorem~\ref{theorem:inconsistency-HSIC-b} hold.

Now consider an arbitrary dimension $p\ge 2$. Suppose we are given two kernels $k_X$ and 
$k_Y$ on $\R^p$ and $\R$, respectively, which satisfy 
Assumptions~\ref{assumption:kernel-representation} and~\ref{assumption:kernel-representation-Y}. 
In particular, this means that $k_X(x, x') = \phi_X(\norm{x-x'}_2^2)$ for every $x, x' \in \R^p$ 
with some function $\phi_X$ obeying the integral representation in Assumption~\ref{assumption:kernel-representation}.
This kernel $k_X$ on $\R^p$ then induces a kernel 
$\bar{k}_X$ on $\R^2$ as $\bar{k}_X(x, x') = \phi_X(\norm{x-x'}_2^2)$ for all
$x, x' \in \R^2$. Because we have shown the validity of Theorem~\ref{theorem:inconsistency-HSIC} and 
Theorem~\ref{theorem:inconsistency-HSIC-b} for $p=2$, we can conclude that for the kernels $\bar{k}_X$ and $k_Y$,
there exists a probability distribution $\bar{\P}$ for $(\bar{X}, \bar{Y})$ 
over $\R^2 \times \R$, where $\bar{X} \in \R^2$ and $\bar{Y} \in \R$, such 
that the conclusions of 
Theorem~\ref{theorem:inconsistency-HSIC} and Theorem~\ref{theorem:inconsistency-HSIC-b}
hold under this distribution $\bar{\P}$. 

Now we use $\bar{\P}$ to construct a distribution $\P$ on $\R^p \times \R$ such that 
Theorem~\ref{theorem:inconsistency-HSIC} and Theorem~\ref{theorem:inconsistency-HSIC-b} 
hold for the kernels $k_X$ and $k_Y$ on $\R^p$ and $\R$, respectively, under this 
distribution $\P$. Given random variables $(\bar{X}, \bar{Y})$ on $\R^2 \times \R$ following distribution $\bar{\P}$, we 
construct random variables $(X, Y)$ on $\R^p \times \R$ as follows: $X_1 = \bar{X}_1$, 
$X_2 = \bar{X}_2$, $X_3 = X_4 = \ldots = X_p = 0$ and $Y = \bar{Y}$. This construction essentially 
extends a two-dimensional random vector $\bar{X}$ into a $p$-dimensional random vector $X$
by simply padding with zeros, which do not affect the evaluation of the kernel distances. 
Given that the conclusions of Theorem~\ref{theorem:inconsistency-HSIC} and 
Theorem~\ref{theorem:inconsistency-HSIC-b} hold under the distribution $\bar{\P}$, it then follows that the 
conclusions of Theorem~\ref{theorem:inconsistency-HSIC} and Theorem~\ref{theorem:inconsistency-HSIC-b} 
extend to the distribution $\P$ we have constructed.


\appendix

\section{Proofs of Technical Lemma} 
\subsection{Proof of Lemma~\ref{lemma:both-X_1-X_2-useful}}
\label{sec:proof-of-lemma:both-X_1-X_2-useful}
The proof is based on calculations. We recognize that for $(x_1, x_2) \in \{\pm 1\}^2$: 
\begin{equation*}
	\P_\Delta(X_1 = x_1, X_2 = x_2) = \sum_{y: y \in \{\pm 1\}} \P_\Delta(X_1= x_1, X_2 = x_2, Y =y)
		= \frac{1}{4} (1+\Delta_1\Delta_2 x_1 x_2). 
\end{equation*} 
Thereby, we derive for $(x_1, x_2) \in \{\pm 1\}^2$ and $y \in \{\pm 1\}$: 
\begin{equation*}
	\P_\Delta(Y = y|X_1 = x_1, X_2 = x_2) = 
		 \half + \half \cdot \frac{
			y(\Delta_1 x_1 +  \Delta_2 x_2)}{1+\Delta_1 \Delta_2 x_1 x_2},
\end{equation*} 
which immediately gives the conditional expectation of $Y$ given $X$: 
\begin{equation}
\label{eqn:full-conditional-expectation} 
	\E_\Delta[Y|X] = \frac{(\Delta_1 X_1 + \Delta_2 X_2)}{1+\Delta_1 \Delta_2 X_1 X_2}.
\end{equation} 

Similarly, by Bayes rule, we obtain for $(x_1, x_2) \in \{\pm 1\}^2$ and $y \in \{\pm 1\}$: 
\begin{equation*}
	\P_\Delta(Y = y|X_1 = x_1) = \half (1+\Delta_1 x_1 y),~~~
	\P_\Delta(Y = y|X_2 = x_2) = \half (1+\Delta_2 x_2 y).
\end{equation*} 
As a consequence, we deduce: 
\begin{equation}
\label{eqn:partial-conditional-expectation}
	\E_\Delta[Y|X_1] = \Delta_1 X_1,~~~\E_\Delta[Y|X_2] = \Delta_1 X_2,~~~\E_\Delta[Y]= 0.
\end{equation} 

Comparing the expressions in equations~\eqref{eqn:full-conditional-expectation} and 
\eqref{eqn:partial-conditional-expectation}, we reach Lemma~\ref{lemma:both-X_1-X_2-useful}
as desired.

\subsection{Proof of Lemma~\ref{lemma:expression-of-L-Delta-beta}}
\label{sec:proof-of-lemma:expression-of-L-Delta-beta}
The proof can be done by mechanical calculations. Here we will mainly present the tricks 
we use to simplify the calculation process. Notably, because the proofs for the two cases of $k_Y$ are similar, below we only present 
the proof for the first case where $k_Y(y, y') = \phi_Y(yy')$. 

Recall the definition: 
\begin{equation}
\label{eqn:recall-definition-HSIC}
\begin{split} 
	\HSIC(\beta \odot X, Y) &= \E_\Delta[k_{X}(\beta \odot X, \beta \odot X') \cdot  k_Y(Y, Y')] + 
		 \E_\Delta[k_{X}(\beta \odot X, \beta \odot X')] \cdot  \E_\Delta[k_Y(Y, Y')] \\
		&~~~~ - 2 \E_\Delta \left[ \E_\Delta[k_{X}(\beta \odot X, \beta \odot X')|X'] \cdot  
			\E_\Delta[k_Y(Y, Y')|Y']\right].
\end{split} 
\end{equation}
where $(X, Y), (X', Y')$ are independent copies from $\P_\Delta$. 

Define $\tilde{k}_Y(y, y') = k_Y(y, y') - c$, with $c \in \R$ to be determined later. Then with any $c \in \R$, 
the value of $\HSIC(\beta \odot X, Y)$ does not change if we replace $k_Y$ in the RHS of 
equation~\eqref{eqn:recall-definition-HSIC} by $\tilde{k}_Y$, i.e.,
\begin{equation}
\label{eqn:recall-definition-HSIC-two}
\begin{split} 
	\HSIC(\beta \odot X, Y) &=  \E_\Delta[k_{X}(\beta \odot X, \beta \odot X') \cdot  \tilde{k}_Y(Y, Y')] + 
		 \E_\Delta[k_{X}(\beta \odot X, \beta \odot X')] \cdot  \E_\Delta[\tilde{k}_Y(Y, Y')] \\
		&~~~~ - 2 \E_\Delta \left[ \E_\Delta[k_{X}(\beta \odot X, \beta \odot X')|X'] \cdot  
			\E_\Delta[\tilde{k}_Y(Y, Y')|Y']\right].
\end{split} 
\end{equation}
Here comes the simplification. We can choose $c \in \R$ such that $ \E_\Delta[\tilde{k}_Y(Y, Y')|Y'] = 0$. 
Note first: 
\begin{equation*}
	 \E_\Delta[\tilde{k}_Y(Y, Y')|Y'] = \E_\Delta[\phi_Y(YY') \mid Y] - c =  \half (\phi_Y(1) + \phi_Y(-1)) - c
\end{equation*} 
where the last identity uses the fact that $Y, Y'$ are independent and uniformly sampled from $\pm 1$.
Thus, if we choose $c = \half (\phi_Y(1) + \phi_Y(-1))$, then two expectations will vanish: 
\begin{equation*}
	\E_\Delta[\tilde{k}_Y(Y, Y')|Y'] = 0 = \E_\Delta[\tilde{k}_Y(Y, Y')]
\end{equation*} 
By substituting this into equation~\eqref{eqn:recall-definition-HSIC-two}, we obtain the identity 
\begin{equation}
\label{eqn:HSIC-simplified}
\begin{split} 
	\HSIC(\beta \odot X, Y) &=  \E_\Delta[k_{X}(\beta \odot X, \beta \odot X') \cdot  \tilde{k}_Y(Y, Y')] \\
		&=  \E_\Delta[k_{X}(\beta \odot X, \beta \odot X') \cdot  YY'] \cdot \half (\phi_Y(1)-\phi_Y(-1)).
\end{split} 
\end{equation} 
where the second identity holds because $\tilde{k}_Y(y, y') = \phi_Y(yy')-\half (\phi_Y(1) + \phi_Y(-1))= \half (\phi_Y(1)-\phi_Y(-1)) yy'$ for 
every pair $y, y' \in \{\pm 1\}$. This gives a much simpler expression than the original one in 
equation~\eqref{eqn:recall-definition-HSIC}.

To further evaluate the RHS of equation~\eqref{eqn:HSIC-simplified}, we compute the conditional expectation: 
\begin{equation*}
\begin{split} 
	V(Y, Y') &= \E_\Delta \left[k_{X}(\beta \odot X, \beta \odot X') \mid Y, Y'\right] \\
		&=\E_\Delta \left[\phi_X(\normsmall{\beta \odot (X-X')}_2^2) \mid Y, Y'\right].
\end{split} 
\end{equation*}
Since $X_1, X_1'$ can only take values in $\pm 1$, $|X_1 - X_1'|$ can only be 
$0$ or $2$. Thus $|X_1 - X_1'| = 2 \mathbf{1}_{X_1 \neq X_1'}$. 
The same applies to $|X_2 - X_2'|$. As a result, we obtain the expression: 
\begin{equation*}
	V(Y, Y') = \E_\Delta\left[\phi_X(4\beta_1^2\mathbf{1}_{X_1 \neq X_1'} 
		+ 4\beta_2^2 \mathbf{1}_{X_2 \neq X_2'}) \mid Y, Y'\right].
\end{equation*}
Given $Y, Y'$,  the random variables $X_1, X_2, X_1', X_2'$ are mutually independent 
by definition of $\P_\Delta$, and thus $\mathbf{1}_{X_1 \neq X_1'}$ is independent of 
$\mathbf{1}_{X_2 \neq X_2'}$. Furthermore, we have 
\begin{equation*}
\begin{split} 
\P_\Delta(X_1 \neq X_1' \mid Y, Y') &= \half (1-\Delta_1^2 YY'),~~~
\P_\Delta(X_2 \neq X_2' \mid Y, Y') = \half (1-\Delta_2^2 YY').
\end{split}
\end{equation*}
As a result, we obtain 
\begin{equation*}
\begin{split}
V(Y, Y') &= \frac{1}{4} \Big[\phi_X(4\beta_1^2 + 4\beta_2^2) (1-\Delta_1^2 YY')(1-\Delta_2^2 YY')
	+\phi_X(4\beta_1^2) (1-\Delta_1^2 YY')(1+\Delta_2^2 YY') \\
	&~~~~~~~~\phi_X(4\beta_2^2) (1+\Delta_1^2 YY')(1-\Delta_2^2 YY')+ \phi_X(0) 
		(1+\Delta_1^2 YY')(1+\Delta_2^2 YY') \Big].
\end{split} 
\end{equation*}

Finally, we go back to equation~\eqref{eqn:HSIC-simplified}. By recognizing the fact that 
\begin{equation*}
	\HSIC(\beta \odot X, Y)  = \E_\Delta[YY' V(Y, Y')] \cdot \half (\phi_Y(1)-\phi_Y(-1)),
\end{equation*} 
and the expectations 
\begin{equation*}
\begin{split} 
	\E_\Delta[(1-\Delta_1^2 YY')(1-\Delta_2^2 YY')YY'] &= - \Delta_1^2 - \Delta_2^2 \\
	\E_\Delta[(1-\Delta_1^2 YY')(1+\Delta_2^2 YY')YY'] &= - \Delta_1^2 + \Delta_2^2 \\
	\E_\Delta[(1+\Delta_1^2 YY')(1-\Delta_2^2 YY')YY'] &=  +\Delta_1^2 - \Delta_2^2 \\
	\E_\Delta[(1+\Delta_1^2 YY')(1+\Delta_2^2 YY')YY'] &=  +\Delta_1^2 + \Delta_2^2 \\
\end{split} 
\end{equation*}
we deduce the expression: 
\begin{equation*}
\begin{split} 
	\HSIC(\beta \odot X, Y) &= \frac{1}{8} \cdot 
		\Big[\phi_X(4\beta_1^2 + 4\beta_2^2)(-\Delta_1^2- \Delta_2^2) 
			+ \phi_X(4\beta_1^2) (-\Delta_1^2 + \Delta_2^2) \\
		&~~~~~~~~~~~~~~~~	+ \phi_X(4\beta_2^2) (-\Delta_2^2 + \Delta_1^2) 
			+ \phi_X(0) (+\Delta_1^2+\Delta_2^2)\Big] \cdot (\phi_Y(1)-\phi_Y(-1)) \\
		&= \frac{1}{8} \cdot (\phi_Y(1)-\phi_Y(-1)) \cdot L_\Delta(\beta). 
\end{split} 
\end{equation*} 
This completes the proof.

\bibliographystyle{amsalpha}
\bibliography{bib}

\appendix
\newpage

\end{document}